\documentclass[conference,10pt]{EHAI/IEEEtran}
\IEEEoverridecommandlockouts
\usepackage{cite}
\usepackage{cuted} 
\usepackage{amsmath,amssymb,amsfonts}
\usepackage{amsthm}
\usepackage{booktabs}
\theoremstyle{definition}
\usepackage{algorithmic}
\usepackage{graphicx}
\usepackage{textcomp}
\usepackage{xcolor}
\usepackage{romannum}
\usepackage{multirow}
\usepackage{etoolbox}
\usepackage{url}
\makeatletter
\newcommand{\autobreak}[2][0pt]{%
  \ifbool{@twocolumn}{\\[#1] #2{} &}{#2}%
}
\newcommand{\autobreakindent}{\ifbool{@twocolumn}{\\ & \qquad \qquad \quad}{}}
\makeatother
\usepackage{subfigure}
\usepackage{cleveref}
\usepackage{caption,setspace}
\usepackage{adjustbox}
\usepackage{caption}
\usepackage[font=footnotesize]{caption}
\usepackage{amsmath,amssymb,amsthm,mathtools}
\usepackage{stfloats}

\theoremstyle{plain}
\newtheorem{theorem}{Theorem}

\newtheorem{remark}{Remark}

\newtheorem{proposition}{Proposition}
\newtheorem{corollary}{Corollary}
\usepackage{graphicx}
\usepackage{algorithm,algorithmic}
\newcommand{\Pcomp}{P_{\mathrm{comp}}}

\def\BibTeX{{\rm B\kern-.05em{\sc i\kern-.025em b}\kern-.08em
    T\kern-.1667em\lower.7ex\hbox{E}\kern-.125emX}}

\begin{document}

\pagenumbering{arabic}
\setcounter{page}{1}

\title{Space Generative AI with Solar Energy Harvesting\\

\thanks{This work has been submitted to the IEEE for possible publication.  Copyright may be transferred without notice, after which this version may no longer be accessible.}

\thanks{
J. Zhang, J. Huang, Z. Wang, and K. Huang are with the Department of Electrical
and Computer Engineering, The University of Hong Kong (HKU), Hong
Kong SAR, China. Emails: jrzhang@eee.hku.hk, jianhaoh@hku.hk, zhanweiw@eee.hku.hk, and huangkb@eee.hku.hk (Corresponding author: K. Huang).}
}

\author{\IEEEauthorblockN{Jierui Zhang, Jianhao Huang, Zhanwei Wang, and Kaibin Huang
}}

\maketitle

\begin{abstract}

Satellites are emerging as promising platforms for extending generative \emph{artificial intelligence} (AI) services to remote areas lacking terrestrial infrastructure. However, deploying space generative AI is fundamentally constrained by the limited and time-varying onboard energy supplied by solar \emph{energy harvesting} (EH). This paper presents a framework for solar-powered space generative AI in which a satellite receives a user prompt, executes a diffusion-based image-generation model, and downlinks the compressed result within a strict time window. To design this framework, we identify fundamental \emph{computation--communication} (C$^2$) trade-offs governed by the shared harvested-energy budgets. Specifically, increasing the number of generation steps improves intrinsic image quality but depletes energy and time available for downlink transmission, whereas prioritizing communication guarantees reliable delivery but sacrifices semantic quality. To balance these trade-offs and maximize \emph{end-to-end} (E2E) generative performance, we exploit the predictable solar-EH dynamics induced by deterministic orbital motion and develop a joint C$^2$ resource-optimization framework using a tractable two-step approach. First, we characterize the maximum downlink throughput for a fixed generation depth under continuous solar EH. This establishes a separation principle that decouples waiting-time selection from optimal transmit-power control. Next, building on this principle, we formulate a joint C$^2$ utility-maximization problem and derive a closed-form, low-complexity step-selection policy in the dominant constant-power regime. Extensive experiments under realistic orbital dynamics demonstrate that the proposed policy dynamically balances generation quality and transmission reliability. This yields significant E2E performance gains over static computation- and communication-centric baselines across diverse solar EH states.

\end{abstract}

\begin{IEEEkeywords}
Space computing network, solar energy harvesting, generative AI, space AI.
\end{IEEEkeywords}

\section{Introduction}

With rapid advances in generative \emph{artificial intelligence} (AI) such as high-fidelity image generation and complex reasoning, supporting globally accessible generative-AI services is becoming a critical objective for future networks~\cite{ho2020denoising,wei2022chain,qu2025mobile,wang2026revisiting}.
However, delivering generative-AI services to remote and underserved areas remains challenging due to the lack of terrestrial AI infrastructure. Although satellites have traditionally served as passive relays connecting remote users to ground networks, this relay-based paradigm may incur significant latency. Enabled by increasing onboard computation capability, satellites are emerging as active AI-service providers for ubiquitous generative-AI service provisioning beyond the coverage of terrestrial networks~\cite{chen2024survey,chen2024fedmeld,esmat2023toward,chen2025space,ji2024dynamic}.  
Meanwhile, satellites can harvest solar energy directly in orbit, avoiding atmospheric attenuation and many ground-level obstructions, thereby supporting onboard tasks through solar \emph{energy harvesting} (EH).
These advantages have motivated two related architectural visions: space data centers, which deploy large-scale computing infrastructure in orbit, and space-ground integrated computing power networks, which coordinate computing resources across satellites, terrestrial edge nodes, and cloud data centers~\cite{sun2026toward,xiao2026sgicpnom,wang2026spacemoe}. A recent example is StarCloud, which launched a prototype orbital computing facility equipped with an H100 GPU for onboard model training and inference, further indicating the feasibility of executing AI workloads in orbit~\cite{starcloud_2025_h100}.
However, realizing practical space generative AI still requires overcoming a substantial challenge: onboard AI inference and downlink transmission typically rely on time-varying energy supplied by solar EH, while computation and communication are tightly coupled through the shared onboard energy budget. 
Although sun-synchronous orbits can mitigate solar-energy variation in some cases, such favorable orbits cannot accommodate all space-AI deployments due to limited orbital resources and mission-specific coverage requirements. Hence, space AI systems must also be designed for general orbital conditions, where solar EH can be time-varying.
This paper investigates the joint management of \emph{computation and communication} (C\(^2\)) for space generative AI, with the goal of maximizing \emph{end-to-end} (E2E) performance under solar-EH constraints.

The deployment of generative AI is inherently energy-intensive due to its heavy computational workload, posing significant challenges for energy-constrained onboard AI execution~\cite{samsi2023words}. Unlike terrestrial AI systems that typically have access to stable power-grid infrastructure, orbital AI must rely on limited onboard energy resources.
To sustain off-grid operations, EH has emerged as a key enabling technology and has attracted extensive research attention, particularly in wireless communications~\cite{ku2015advances}. In satellite systems, solar EH is the primary energy source, but the harvested power is not a static budget. Instead, it varies continuously with orbital motion, solar incidence angle, panel orientation, and sunlight-eclipse transitions~\cite{yang2016towards,liu2024orbit,li2024battery}. 
Although the satellite-ground visibility window may be much longer than a single service request, an interactive generative-AI task typically imposes a much shorter execution window to ensure a timely response~\cite{qu2026trimcaching}. This short window limits both the time available for energy accumulation and the duration available for onboard computation and downlink transmission. As a result, the satellite cannot simply wait until sufficient solar energy is harvested, but must complete waiting, computation, and communication within a tight service deadline. During this window, the available onboard energy is strongly determined by the satellite's orbital position and solar-EH-induced battery state. 
Therefore, completing such a task requires careful allocation of limited harvested energy and time between onboard generation and downlink transmission.

This allocation problem is particularly challenging for space generative AI because computation and communication jointly determine the final received content quality.  
Consider a text-to-image service where a ground device uploads a lightweight prompt to a satellite, the satellite generates an image onboard using a diffusion model~\cite{rombach2022high,song2020denoising}, and the generated image is then compressed and downlinked to the ground device. In this pipeline, computation and communication draw from the same harvested-energy buffer. Increasing the number of denoising steps generally improves the intrinsic quality of the generated image, but it consumes more energy and leaves less energy for downlink transmission. Conversely, reserving more energy for transmission increases the delivery capacity and can reduce compression distortion, but it forces the generative process to use fewer denoising steps, which degrades semantic quality. As a result, a computation-centric policy synthesizes a high-quality image that cannot be faithfully delivered, while a communication-centric policy reliably delivers an image whose intrinsic quality is poor. Therefore, maximizing the E2E generation quality requires task-aware allocation of energy between onboard generation and downlink transmission under solar-EH-induced causality constraints.

Existing studies provide useful foundations, but they do not fully address the above generate-and-deliver trade-off. A first line of work is EH communications, which studies transmission scheduling under energy-causality constraints. Classical results characterize optimal throughput-maximizing policies through methods such as directional water filling and save-then-transmit strategies~\cite{ozel2011transmission,tutuncuoglu2012optimum,yang2011optimal,luo2013optimal}. 
These works establish important principles for energy-causal transmission, but they typically allocate harvested energy solely to communication, without considering computation that may compete for the same energy buffer.
A second related line considers energy-aware satellite systems. Several studies have investigated satellite energy management, sunlight-aware task scheduling, and battery-aware edge computing under orbital energy constraints~\cite{yang2016towards,liu2024orbit,li2024battery}. These works highlight the importance of orbital energy dynamics, but they mainly focus on routing, task scheduling, or generic computing workloads. They do not model the task-specific quality behavior of generative AI, where the computation depth directly affects the semantic quality of the output.
Finally, recent studies have explored joint computation and communication
resource allocation, as well as broader sensing--communication--computation
integration\cite{wen2026ISCC}. Some works minimize download time or information freshness by jointly optimizing onboard processing and transmission~\cite{ouyang2023joint,li2025age,cao2023computing}, while others focus on energy efficiency, computation offloading, compression, or network throughput~\cite{ding2021joint,wang2023energy,dai2025energy,qian2025energy,he2025joint,lai2024joint,gong2022computation,zhang2022exploiting}.
However, most of these formulations treat computation as a fixed workload or a generic offloading/compression task. They do not capture a key property of diffusion-based generation: the number of denoising steps is a controllable quality knob that should be jointly adapted with downlink transmission. Moreover, solar EH is rarely considered in generative-service-oriented satellite designs. Consequently, existing EH communication and satellite C\(^2\) frameworks are insufficient for optimizing E2E generative performance under solar EH.

The key challenges of optimizing such a framework arise from two factors: (1) the time-varying profiles of both the wireless channel and solar EH, and (2) the coupling between onboard generation and subsequent downlink transmission under the solar-EH profile. For the first challenge, deterministic orbital motion and geometric relationships make these profiles predictable over the task execution window and provide tractable mathematical models, thereby enabling proactive resource allocation. For the second challenge, onboard generation and downlink transmission jointly determine the E2E performance, requiring the system to manage generation quality, waiting time, and communication throughput under a shared solar-EH-powered energy buffer, rather than merely scheduling transmit power as in conventional EH communication. To study this joint C\(^2\) optimization for space generative AI, we decompose the E2E performance into two coupled components: onboard generation quality and downlink throughput.
We first fix the generation step and optimize the waiting time and transmit power to characterize the maximum achievable downlink throughput under continuous energy causality.
Building on this characterization, we then formulate a joint C\(^2\) utility that balances onboard generation quality and downlink throughput, and derive a closed-form, low-complexity step-selection rule in the dominant constant-power regime.
Specifically, the key contributions and findings are summarized as follows:
\begin{itemize}
    \item \textbf{Communication Throughput Maximization under Solar EH:}
     We fix the generation step and investigate a communication-throughput maximization problem. This problem not only serves as the inner subproblem of the subsequent joint C\(^2\) optimization, but also becomes critical when the generated content has a large downlink data volume. Unlike classical EH communication, which provides general structural solutions for arbitrary EH profiles, our problem involves an orbital solar-EH profile and a waiting-time decision introduced by onboard task execution. To solve the problem, we instantiate a tractable, representative solar-EH model based on orbital geometry, prove a separation principle between waiting-time selection and transmit-power control, and derive the optimal energy-feasible transmit-power policy using the \emph{lower convex envelope} (LCE) principle.
    \item \textbf{Joint C$^2$ Optimization under Solar EH:} 
     Beyond the fixed-step communication-throughput optimization, we further consider the general case where onboard generation quality and downlink throughput must be jointly balanced.
    Specifically, we approximate the relationship between onboard generation quality (measured by \emph{CLIP score} \cite{radford2021learning,cai2024condition}) and generation steps, and then formulate a joint C$^2$ utility maximization problem over the generation step, waiting time, and transmit-power policy. This problem is challenging because these control variables are tightly coupled through the shared solar-EH-powered energy buffer. By decomposing the problem with respect to the generation step, the inner subproblem is connected to the throughput-maximization problem derived above. In the dominant constant-power regime, this decomposition yields a reduced single-variable problem, for which we derive a closed-form, low-complexity step-selection rule using the Lambert \(W\) function. The resulting rule enables lightweight onboard adaptation and reveals how the optimal generation step changes with the solar-EH state and system parameters.
    \item \textbf{Experimental Results:} We conduct comprehensive experiments to evaluate the proposed space generative AI framework under realistic \emph{low-Earth-orbit} (LEO) parameters. The results show that the proposed joint optimization scheme adapts between computation-centric and communication-centric schemes according to the solar-EH state. It approaches the more suitable extreme policy in energy-scarce or energy-abundant regimes, while outperforming both extremes in moderate-energy regimes through balanced onboard generation and downlink transmission. Overall, it achieves more robust E2E generation performance than fixed extreme allocation schemes.
\end{itemize}

The remainder of this paper is organized as follows. Section~\ref{system model} introduces the models and metrics. Section~\ref{Communication Throughput Maximization} solves the communication throughput maximization problem. Section~\ref{Utility Maximization} investigates a joint C$^2$ optimization problem and derives the closed-form solution. Experimental results are provided in Section~\ref{Experimental Results}, followed by concluding remarks in Section~\ref{Concluding Remarks}.

\section{System Model}
\label{system model}

Consider a space generative AI framework powered by solar EH, as shown in Fig.~\ref{fig:system}. 
In this framework, a static ground device offloads prompt-based generation tasks to a satellite. Then, the solar-EH-powered satellite completes a sequence of onboard operations within a short task execution window of duration \(T\), including waiting, content generation, and result transmission. The solar EH model, onboard computation model, space-ground communication model, and performance metrics are detailed in the following subsections.

\begin{figure*}[t]
\centering
\subfigure[Space generative AI framework.]{    \includegraphics[width=0.48\textwidth]{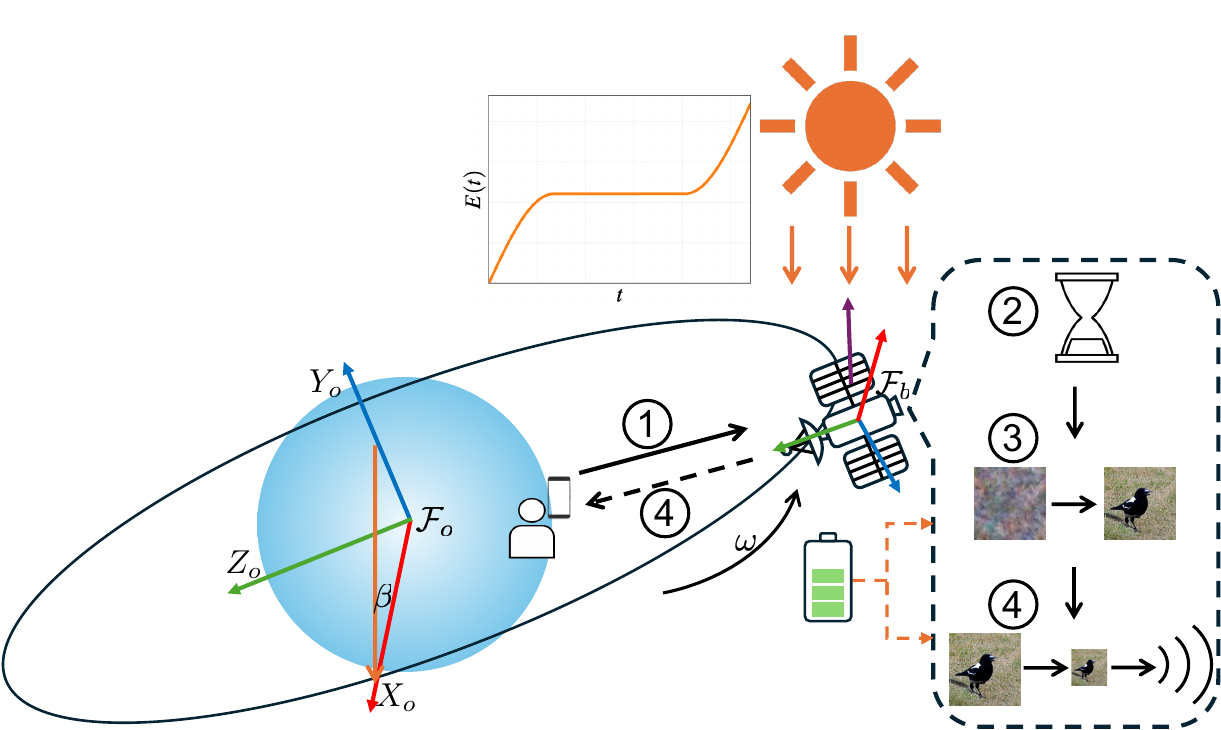}
    \label{fig:system_model}
}
\hspace{0.05\textwidth}
\subfigure[Operations and protocol.]{   \includegraphics[width=0.23\textwidth]{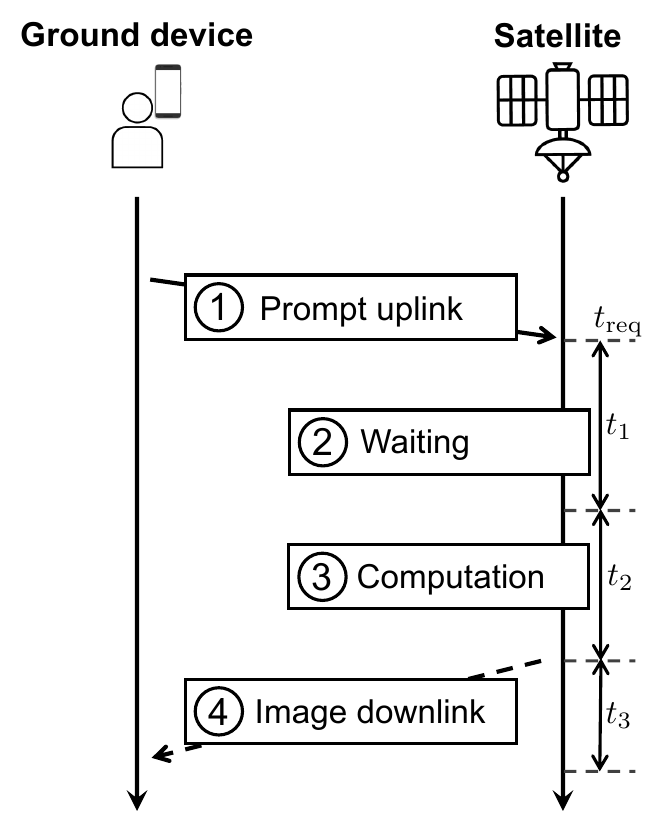}   \label{fig:protocol}
}
\caption{System model of space generative AI.}
\label{fig:system}
\vspace{-4ex}
\end{figure*}

\subsection{Solar EH Model}

In this subsection, we model satellite solar EH based on orbital geometry. Two Cartesian coordinate systems, the \emph{orbital reference frame} ($\mathcal{F}_o$) and the \emph{satellite body frame} ($\mathcal{F}_b$)~\cite{wertz2012spacecraft}, are as shown in Fig.~\ref{fig:system_model}.
\begin{itemize}
    \item \textbf{Orbital Reference Frame ($\mathcal{F}_o$):} The origin is located at the center of the Earth. The $Y_o$-axis is orthogonal to the orbital plane. The $X_o$-axis points towards the projection of the sun vector onto the orbital plane, and the $Z_o$-axis completes the right-handed system. The sun vector is
    \begin{equation}
        \mathbf{S}_o = [\cos\beta, \sin\beta, 0]^T,
    \end{equation}
    where \(\beta\) denotes the solar elevation angle.
    \item \textbf{Satellite Body Frame ($\mathcal{F}_b$):} The origin is the satellite's center of mass. We adopt a standard nadir-pointing attitude \cite{cheriet2021inertia}, where $z_b$ points towards the Earth (nadir), $y_b$ is aligned with the negative orbit normal (cross-track) and $x_b$ is aligned with the velocity vector (along-track).
\end{itemize}
The angular velocity $\omega$ of the satellite is determined by Kepler's third law, i.e., 
$\omega = \sqrt{\frac{GM}{(R_{\mathrm{E}} + H)^3}},$
where $G$ is the gravitational constant, $M$ is the mass of the Earth, $R_{\mathrm{E}}$ is the Earth radius, and $H$ is the orbital altitude\cite{bate2020fundamentals}. Consider a solar panel rigidly mounted on the satellite body, with unit normal vector \(\mathbf{n}= [n_x,n_y,n_z]^T \in \mathbb{R}^3\) expressed in \(\mathcal{F}_b\).
Assume that $t=0$ corresponds to the orbital noon and that $\mathbf{n} = [0,0,-1]^T$ (i.e., the anti-nadir direction); then the solar-EH profile (the instantaneous harvested power) is\footnote{The orbital period of LEO satellite (tens of minutes) is negligible relative to that of Earth's revolution around the Sun. Hence, we assume that the sunlight direction and the orbital plane remain constant within a single orbital period.}
\begin{equation}
    P(t)=
\begin{cases}
P_0\cos(\omega t), & 0 \le t \le \dfrac{\pi}{2\omega},\\[4pt]
0, & \dfrac{\pi}{2\omega} \le t \le \dfrac{3\pi}{2\omega},\\[4pt]
P_0\cos(\omega t), & \dfrac{3\pi}{2\omega} \le t \le \dfrac{2\pi}{\omega}.\label{EH power}
\end{cases}
\end{equation}
Here, $P_0=\eta \cdot \gamma \cdot A \cdot \cos\beta$, where $\eta$ is the power conversion efficiency, $\gamma$ is the solar irradiance per unit area, and $A$ is the area of the solar panel~\cite{yang2016towards}. Integrating \eqref{EH power} from the orbital-noon reference yields the following representative cumulative-energy profile:
\begin{equation}
    E(t) =
\begin{cases}
\dfrac{P_0}{\omega}\sin(\omega t)+E_0, & 0 \le t \le \dfrac{\pi}{2\omega},\\[4pt]
\dfrac{P_0}{\omega}+E_0, & \dfrac{\pi}{2\omega} \le t \le \dfrac{3\pi}{2\omega},\\[4pt]
\dfrac{P_0}{\omega}\bigl(2 + \sin(\omega t)\bigr)+E_0, & \dfrac{3\pi}{2\omega} \le t \le \dfrac{2\pi}{\omega},\label{EH energy}
\end{cases}
\end{equation}
where $E_0 \ge 0$ denotes the initial energy at task arrival.

\subsection{Onboard Generative AI Model}

 Although generative AI supports various modalities, text-to-image generation naturally fits the considered scenario: a power-limited ground user uploads only a lightweight prompt, while the solar-EH-powered satellite generates and downlinks the larger image result. Hence, we consider a typical text-to-image generation task, as described below.

\subsubsection{Generative AI Model}

We adopt a representative \emph{latent diffusion model} (LDM) \cite{rombach2022high} as the generative backbone for text-to-image generation. The inference pipeline consists of three distinct phases as follows. First, a pre-trained text encoder $\tau_\theta$ processes the input prompt to produce conditional embeddings $\mathbf{c}$. Second, initialized with random Gaussian noise $\mathbf{z}_n$ in the latent space, the diffusion backbone iteratively refines the latent representation over $n$ steps using the \emph{denoising diffusion implicit model} (DDIM) scheduler conditioned on $\mathbf{c}$~\cite{song2020denoising}. Finally, the VAE decoder $\mathcal{D}$ projects the clean latent $\mathbf{z}_0$ back into the pixel space to generate the final image $\mathbf{x}$. 

\subsubsection{Computation Energy Consumption}

We assume that, for each task, the computation power \(P_{\mathrm{comp}}\) is fixed. Let $\Omega_1$ and $\Omega_2$ denote the \emph{number of floating point operations} (FLOPs) of each DDIM step and the FLOPs of remaining operations for a single inference (e.g., decoder of the VAE). Let $\nu$ denote the computation speed. Then, the computation time is
\begin{equation}
    t_2 = c_1 n + c_2,\label{t2}
\end{equation}
where $c_1=\frac{\Omega_1}{\nu}$ and $c_2=\frac{\Omega_2}{\nu}$.
Since the satellite may not possess sufficient energy to execute the task immediately upon reception, a waiting phase $t_1$ may be required. Hence, the cumulative computation energy consumption is given by
\begin{equation}
\begin{aligned}
E_{\mathrm{comp}}(t)\!=\!
\begin{cases}
0, &\!\!\!\! t_{\mathrm{req}} \le t < t_{\mathrm{req}} \!+\! t_1,\\[-0.5ex]
P_{\mathrm{comp}} (t - t_{\mathrm{req}} - t_1), &\!\!\!\! t_{\mathrm{req}}\! +\! t_1 \le t < a,\\[-0.5ex]
P_{\mathrm{comp}}\, t_2, &\!\!\!\! a \le t \le b,
\end{cases}
\end{aligned}
\end{equation}
where \(a \triangleq t_{\mathrm{req}}+t_1+t_2\) and \(b \triangleq t_{\mathrm{req}}+T\).

\subsection{Space-ground Communication Model}

We focus on downlink transmission because the prompt-uplink traffic is negligible.
Consider a static ground device located on the Earth's surface.
We define the \emph{delivery capacity} as the achievable number of bits delivered over the communication window \([a,b]\).
In this work, the satellite can exploit non-causal information because both the orbital trajectory and the space-ground channel evolution are predictable. Hence, the delivery capacity and the corresponding transmit power policy can be determined in advance (detailed in Section~\ref{Communication Throughput Maximization}). Then, the generated image \(\mathbf{x}\) is compressed into \(\mathbf{x}'\) such that its data size does not exceed the delivery capacity. The compressed image is then transmitted according to the optimized power policy.
The instantaneous channel gain $g(t)$ is
\begin{equation}
    g(t) = G_{\mathrm{tx}} G_{\mathrm{rx}}L_{\mathrm{FSPL}}L_{\mathrm{AL}},
    \label{FSPL}
\end{equation}
where $G_{\mathrm{tx}}$ and $G_{\mathrm{rx}}$ denote the transmit and receive antenna gains, respectively~\cite{chen2024fedmeld,cai20253c}. $L_{\mathrm{FSPL}}=\big(\frac{\ell}{4\pi d(t)}\big)^2$ represents the large-scale free-space path-loss factor, where $\ell = \frac{c}{f_c}$ is the downlink carrier wavelength, $c$ is the speed of light, $f_c$ is the carrier frequency, and $d(t)$ is the time-varying distance between the satellite and the ground device. Note that $d(t)$ is predictable based on the satellite's orbital dynamics, thereby leading to a predictable channel. $L_{\mathrm{AL}}$ accounts for additional environmental attenuation, such as rain fading. Small-scale fading is omitted, due to the nature of the considered remote area~\cite{chen2024fedmeld}. According to Shannon's theorem \cite{shannon1948mathematical}, the instantaneous communication rate is given by
\begin{equation}
    r(t) = B \log_2 \left( 1 + \frac{P_{\mathrm{comm}}(t) g(t)}{N_0 B} \right),
    \label{r(t)}
\end{equation}
where $P_{\mathrm{comm}}(t)$ is the transmit power, $N_0$ is the noise power spectral density, and $B$ is the bandwidth. 
Then, the cumulative communication energy consumption is given by 
\begin{equation}
\begin{aligned}
E_{\mathrm{comm}}(t)=
\begin{cases}
0, & t_{\mathrm{req}} \le t < a,\\[-0.5ex]
\int_{a}^{t} P_{\mathrm{comm}}(\tau)\, \mathrm{d}\tau, & a \le t \le b.
\end{cases}
\end{aligned}
\end{equation}

\subsection{Performance Metrics}
\label{metrics}

We aim to maximize the E2E generation quality, measured by the E2E \emph{CLIP score}, i.e., the semantic similarity between the text prompt and the image received at the ground device~\cite{radford2021learning,cai2024condition}. However, directly characterizing this E2E metric in closed form is challenging, as it depends jointly on the generative model, compression process, and downlink transmission.
Therefore, we instead consider two tractable factors, corresponding to computation and communication, respectively, that jointly determine the E2E metric:
\begin{itemize}
    \item \textbf{Onboard CLIP score:} This metric measures the semantic similarity between the text prompt and the onboard-generated image, reflecting its intrinsic quality. Empirical results show that the CLIP score \(S\) increases with the number of DDIM steps \(n\).
    \item \textbf{Communication throughput:} This metric is defined as the total number of bits delivered over the communication window~\cite{ozel2011transmission}, given by
    \begin{equation}
        R = \int_{a}^{b} r(t) \, \mathrm{d}t. 
        \label{throughput}
    \end{equation}
    A higher communication throughput allows a lower compression ratio and thus better preserves the quality of the generated image. It is related to waiting time, computation time, and transmit power policy.
\end{itemize}

\begin{remark}
    Under solar-EH constraints, these two factors exhibit an inherent C\(^2\) trade-off: allocating more energy to onboard generation improves intrinsic image quality, but reduces the energy available for downlink transmission, and vice versa. This motivates the analytical design in the following two sections: Section~\ref{Communication Throughput Maximization} studies communication-throughput maximization when fixing computation, while Section~\ref{Utility Maximization} jointly optimizes C\(^2\) for E2E performance maximization.
\end{remark}

\section{Communication Throughput Maximization under Solar EH}
\label{Communication Throughput Maximization}

In this section, we maximize communication throughput under solar EH, assuming a fixed computation time. 
This formulation constitutes a critical subproblem of the joint optimization framework presented in Section \ref{joint C2}.

\subsection{Problem Formulation}

 The communication throughput maximization problem is formulated in \eqref{P1}, where we fix $n$ and thus $t_2$ is also fixed.

\begin{subequations}
\label{P1}
\begin{align}
\mathrm{(P1)} 
\max_{t_1,\,P_{\mathrm{comm}}(\cdot)}\ \
& \int_a^b r(\tau)\,\mathrm{d}\tau
\label{opt goal P1}
\\[-0.2ex]
\text{s.t.}\ \
& P_{\mathrm{comp}}(t\!-\!t_{\mathrm{req}}\!-\!t_1) \!\le\! E(t),
\ \forall t\!\in\![t_{\mathrm{req}}\!+\!t_1,\!a],
\label{comp constraint P1}
\\[-0.2ex]
& \int_a^t\!\! P_{\mathrm{comm}}\!(\tau)\mathrm{d}\tau
\!\le\! E(t)\!-\!P_{\mathrm{comp}}t_2,\ \forall t\!\in\![a,\!b],
\label{comm constraint P1}
\end{align}
\end{subequations}
where $r(t)$ is given in \eqref{r(t)}.
Since the waiting time \(t_1\) is continuous and the transmit power \(P_{\mathrm{comm}}(\cdot)\) is a continuous-time function, directly solving problem (P1) is highly complex. A possible approach is to discretize the feasible range of \(t_1\) and, for each discretized value, apply the continuous-EH transmission result in~\cite{varan2014energy}. However, this discretization generally yields only an approximate solution and incurs high computational complexity when a fine time resolution is required. Therefore, we first investigate the structural properties of the optimal policy in the following subsection.

\subsection{Properties of Optimal Solution}\label{solution structure}

To derive the exact solution of (P1), we first analyze the properties of the optimal solution. The property of the optimal waiting time is characterized in the following proposition.

\begin{proposition}[Optimal Waiting Time]\label{prop t1}
    To achieve the optimal solution, the waiting time $t_1$ should be minimized, provided that the energy causality constraint for computation \eqref{comp constraint P1} is satisfied. Specifically, the optimal value $t_1^*$ is given by
\begin{equation}
\begin{split}
t_1^*
= &\inf \Big\{\, t_1 \in [0,T-t_2] \;\Big|\;
\autobreak[-1ex]
P_{\mathrm{comp}}(t-t_{\mathrm{req}}-t_1) \le E(t), \; \forall\, t \in [t_{\mathrm{req}}+t_1,a]
\,\Big\}.
\end{split}
\label{eq:t_opt}
\end{equation}
\end{proposition}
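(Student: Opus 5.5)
The plan is an exchange (time-shifting) argument. Take any feasible pair $(t_1,P_{\mathrm{comm}}(\cdot))$ of (P1). I will construct another feasible pair whose waiting time is $t_1^*$ from \eqref{eq:t_opt} and whose throughput is no smaller. Together with attainment of the infimum, this shows that an optimal solution exists with $t_1=t_1^*$.

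\textbf{Step 1: $t_1^*$ is feasible.} Let $\mathcal{T}\subseteq[0,T-t_2]$ be the set in \eqref{eq:t_opt}. It is nonempty whenever (P1) is feasible, since the $t_1$ of any feasible pair belongs to it. Write the constraint as $\phi(t_1)\triangleq\max_{s\in[0,t_2]}\bigl(P_{\mathrm{comp}}s-E(t_{\mathrm{req}}+t_1+s)\bigr)\le 0$, using $s=t-t_{\mathrm{req}}-t_1$. Here $E(\cdot)$ in \eqref{EH energy} is continuous and the maximum is over a fixed compact set. Hence $\phi$ is continuous, so $\mathcal{T}$ is closed and $t_1^*=\min\mathcal{T}\in\mathcal{T}$. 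I expect this to be the main technical point. It is what turns the infimum into a minimum and makes the computation constraint \eqref{comp constraint P1} hold at $t_1^*$; note that $\mathcal{T}$ need not be an interval, and the argument does not need it to be.

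\textbf{Step 2: shift the communication window earlier.} Set $a^*=t_{\mathrm{req}}+t_1^*+t_2\le a$ and define
\[
\tilde P_{\mathrm{comm}}(t)=0 \text{ for } t\in[a^*,a), \qquad \tilde P_{\mathrm{comm}}(t)=P_{\mathrm{comm}}(t) \text{ for } t\in[a,b].
\]
Constraint \eqref{comp constraint P1} holds for $t_1^*$ by Step 1.

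\textbf{Step 3: check the communication constraint.} Since $P(t)\ge 0$ in \eqref{EH power}, $E(t)$ is nondecreasing.
\begin{itemize}
\item For $t\in[a^*,a]$, the cumulative transmit energy is $0$. Evaluating Step 1 at $t=a^*$ gives $P_{\mathrm{comp}}t_2\le E(a^*)\le E(t)$, so \eqref{comm constraint P1} holds.
\item For $t\in[a,b]$, the cumulative energy $\int_{a^*}^t\tilde P_{\mathrm{comm}}\,\mathrm{d}\tau$ equals $\int_a^tP_{\mathrm{comm}}\,\mathrm{d}\tau$. The right-hand side $E(t)-P_{\mathrm{comp}}t_2$ is unchanged, so the constraint is inherited from the original pair.
\end{itemize}

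\textbf{Step 4: compare throughputs.} Because $r(t)\ge 0$, the new throughput satisfies $\int_{a^*}^b\tilde r\,\mathrm{d}\tau\ge\int_a^b r\,\mathrm{d}\tau$. Thus waiting longer than $t_1^*$ can never help.

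\textbf{Optional strictness remark.} Whenever $a^*<a$, one could redistribute a small amount of energy into $[a^*,a)$ to gain strictly positive rate, using $g(t)>0$ and the slack available there. This refinement is not needed: the weak inequality in Step 4 already establishes optimality of $t_1^*$.
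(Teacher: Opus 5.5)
Your proposal is correct and uses the same zero-extension exchange argument as the paper's proof. It also makes explicit two points the paper leaves implicit: that the infimum in \eqref{eq:t_opt} is attained (by continuity of $E$), and that the communication constraint holds on the new segment $[a^*,a)$ because $P_{\mathrm{comp}}t_2\le E(a^*)\le E(t)$.

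One small correction to your aside: since $E$ is nondecreasing, feasibility of $t_1$ implies feasibility of every larger $t_1'\le T-t_2$. The feasible set is therefore the interval $[t_1^*,T-t_2]$, although, as you note, the argument does not need this.
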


\begin{proof}
    Let $t_1^*$ denote the minimal feasible waiting time defined in \eqref{eq:t_opt}. We show that any $\hat{t}_1 > t_1^*$ cannot yield a higher throughput than $t_1^*$. 
    Consider a feasible policy $(\hat{t}_1, \hat{P}_{\mathrm{comm}}(\cdot))$ achieving a throughput $\hat{R}$. Since $t_1^* < \hat{t}_1$, the computation phase completes earlier, expanding the feasible time window for communication. 
    Let $a^*=t_{\mathrm{req}}+t_1^*+t_2$ and $\hat a=t_{\mathrm{req}}+\hat t_1+t_2$. By extending $\hat P_{\mathrm{comm}}(t)$ as zero over $[a^*,\hat a)$, the resulting policy remains feasible for (P1) and achieves the same throughput $\hat R$.
    Since reducing $t_1$ does not shrink the feasible set of communication policies, $t_1^*$ is optimal.
\end{proof}

\begin{remark}[Separation Principle]
    Proposition \ref{prop t1} reveals that the optimization of $t_1$ and $P_{\mathrm{comm}}(\cdot)$ can be decoupled without compromising optimality. This separation significantly reduces the problem complexity, as it allows the optimal waiting time to be determined prior to solving for the power control.
\end{remark}

Having determined the optimal waiting time $t_1^*$, we proceed to optimize the communication policy. Let $[a, b]$ denote the communication interval corresponding to $t_1^*$. We define the feasible set for the transmit power $P_{\mathrm{comm}}(\cdot)$ as
\begin{equation}
\begin{aligned}
    \mathcal{B} \triangleq \Big\{ & P_{\mathrm{comm}} : [a,b] \to [0,\infty) \;\Big|\; \\[-1.2ex]
    &\quad \int_{a}^{t} P_{\mathrm{comm}}(\tau) \, \mathrm{d}\tau \le \bar{E}(t), \quad \forall\, t \in [a,b] \Big\},
\end{aligned}
\label{feasible set}
\end{equation}
where \(\bar{E}(t) \triangleq  E(t) - P_{\mathrm{comp}} t_2\) represents the residual harvested energy available for communication. 
Since the task execution window $T$ is relatively small, the channel gain over the window can be assumed to be unchanged, i.e., $g(t) \equiv g$. 
For simplicity, we let \(\alpha \triangleq  \frac{g}{N_0 B}\),
\(\phi(P) \triangleq  B \log_2(1 + \alpha P)\). Note that $a$ is a linear function of $t_1$.
Consequently, (P1) reduces to the following problem:
\begin{equation}
    \mathrm{(P2)} \quad \max_{P_{\mathrm{comm}}(\cdot) \in \mathcal{B}} \quad 
    \int_{a}^{b} \phi\big( P_{\mathrm{comm}}(\tau) \big) \, \mathrm{d}\tau. \label{P3}
\end{equation}
Since $E_{\mathrm{comm}}(t)$ and $P_{\mathrm{comm}}(t)$ have a one-to-one mapping, we use them interchangeably in the analysis. 
The property of optimal transmit power is given by the following theorem.

\begin{theorem}[Optimal Policy via Lower Convex Envelope~\cite{varan2014energy}]\label{theorem 1}
    The optimal cumulative energy consumption $E^*_{\mathrm{comm}}(t)$ for $t \in [a, b]$ is given by the lower convex envelope (LCE) of $\bar{E}(t)$, anchored at the initial point $(a, 0)$ and the final point $(b, \bar{E}(b))$.
\end{theorem}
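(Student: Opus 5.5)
We prove Theorem~\ref{theorem 1} by a majorization/concavity argument on cumulative-energy curves. Work with $E_{\mathrm{comm}}(t)=\int_a^t P_{\mathrm{comm}}(\tau)\,\mathrm{d}\tau$, which must be nondecreasing, start at $0$, and stay below $\bar E(t)$ on $[a,b]$. The objective is $\int_a^b \phi\big(E_{\mathrm{comm}}'(\tau)\big)\,\mathrm{d}\tau$ with $\phi$ strictly concave and increasing.

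The candidate $E^\star$ is defined as the greatest convex function on $[a,b]$ with $E^\star(a)=0$, $E^\star(b)=\bar E(b)$, and $E^\star\le\bar E$ (taking $\bar E(a)\ge0$ by feasibility of $t_1^*$). We proceed in the following order.

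\textbf{Step 1: feasibility and terminal energy.} $E^\star$ is a supremum of affine minorants, so it is convex and lies below $\bar E$. Since $\bar E$ is nondecreasing (the EH profile is nonnegative) and $E^\star(a)=0\le E^\star(b)$, convexity gives that $E^\star$ is nondecreasing; hence its derivative $P^\star\ge0$ and $P^\star\in\mathcal B$. We also show it is without loss of optimality to end at $E_{\mathrm{comm}}(b)=\bar E(b)$. Any slack energy can be added near $b$, where the constraint $\le\bar E(b)$ is the binding one. Because $\phi$ is increasing, this does not decrease throughput.

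\textbf{Step 2: local structure of $E^\star$.} On every maximal open interval where $E^\star<\bar E$, $E^\star$ is affine, so $P^\star$ is constant there. Where $P^\star$ changes slope, it can only increase, and only at contact points with $\bar E$ where the constraint is tight. This follows from the standard supporting-line characterization of the lower convex envelope.

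\textbf{Step 3: optimality.} Take any feasible $E$ with $E(b)\le\bar E(b)$. Partition $[a,b]$ into the segments $[s_k,s_{k+1}]$ between consecutive contact points of $E^\star$, on which $P^\star\equiv p_k$ with $p_k$ nondecreasing in $k$. Use the concavity bound
$$\phi(P)\le\phi(p_k)+\phi'(p_k)(P-p_k)$$
on each segment and sum over segments. The throughput gap becomes
$$\sum_k \phi'(p_k)\big[(E(s_{k+1})-E(s_k))-(E^\star(s_{k+1})-E^\star(s_k))\big].$$
Apply summation by parts with $D(t)=E(t)-E^\star(t)$, using $D(a)=0$ and $D(b)\le0$. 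Since $D(s_k)\le0$ at the contact points (where $E^\star=\bar E\ge E$), and the weights $\phi'(p_k)$ are nonincreasing in $k$, the Abel-summed expression is $\le0$. Hence $E^\star$ achieves maximal throughput. Strict concavity of $\phi$ additionally yields uniqueness of $P^\star$ almost everywhere.

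\textbf{Main obstacle.} The hardest part is making Step 3 rigorous for the continuous-time, piecewise-trigonometric $\bar E$. The contact set of $E^\star$ with $\bar E$ need not be finite: $E^\star$ can coincide with $\bar E$ on whole intervals where $\bar E$ is convex, such as the rising branch $\frac{P_0}{\omega}(2+\sin\omega t)$. In that case we replace the finite Abel sum with its integral analogue, a Stieltjes integration by parts:
$$\int_a^b \phi'(P^\star)\,\mathrm{d}D=\phi'(P^\star(b))\,D(b)-\int_a^b D\,\mathrm{d}\phi'(P^\star).$$
Here $\phi'(P^\star)$ is monotone nonincreasing, and $\mathrm{d}\phi'(P^\star)$ is supported on the contact set, where $D\le0$. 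Both terms then have the right sign. Alternatively, we can simply invoke the continuous-EH result of~\cite{varan2014energy}, which this statement cites, after checking that $\bar E$ meets its hypotheses (continuous and nondecreasing).
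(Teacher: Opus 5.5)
Your proof is correct. It differs from the paper's route only in that the paper gives no argument of its own: the theorem is imported from the continuous-arrival result of \cite{varan2014energy}, which is exactly your closing alternative.

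Your main route is a self-contained sufficiency proof. The first-order concavity bound reduces the throughput gap to $\int_a^b \phi'(P^\star)\,\mathrm{d}D$ with $D=E-E^\star$. Since $\phi'(P^\star)$ is a nonincreasing weight whose variation is carried by the contact set $\{E^\star=\bar E\}$, where $D\le 0$, Abel summation (or Stieltjes integration by parts) signs the gap. In effect, $\phi'(P^\star)$ is an explicit dual certificate.

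What this buys is that the hypotheses actually needed become explicit and minimal: $\bar E$ continuous and nondecreasing with $\bar E(a)\ge 0$ (guaranteed by the computation constraint at $t=a$), and $\phi$ increasing, differentiable and concave (strictly, for uniqueness). You therefore never have to check that the present setting fits the reference's model, including its initial anchor $(a,0)$, which may lie strictly below $\bar E(a)$. You also get a.e.\ uniqueness. The Stieltjes form covers the cosine-tracking regime of Case~2, where the envelope coincides with $\bar E$ on a whole interval. The citation is shorter and rests on a published general result, but it leaves that hypothesis check implicit.

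Three small repairs:
\begin{itemize}
\item In Step~1, convexity together with $E^\star(a)\le E^\star(b)$ does not imply monotonicity, since a convex function can dip in between. You need $E^\star\ge 0=E^\star(a)$ on all of $[a,b]$. This holds because $\bar E\ge\bar E(a)\ge 0$ makes admissible the function equal to $0$ on $[a,b)$ and to $\bar E(b)$ at $b$.
\item The terminal-energy normalization in Step~1 is unnecessary, since Step~3 only uses $D(b)\le 0$.
\item For the integration by parts, record that $\phi'(P^\star)$ is monotone and bounded by $\phi'(0)$, hence of bounded variation, while $D$ is absolutely continuous.
\end{itemize}
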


\subsection{Optimal Waiting Time and Transmit Power}

Based on the preceding results, this subsection introduces the optimal solutions for (P1) under the representative solar-EH profile in 
\eqref{EH energy}. According to Proposition~\ref{prop t1} and Theorem~\ref{theorem 1}, a minimal possible waiting time and an LCE of the solar-EH profile are required.
Leveraging the closed-form expression of the EH curve, we can determine a closed-form solution efficiently. For analytical clarity, we assume that $P_{\mathrm{comp}}\,t_2<\frac{P_0}{\omega}$, $t_2<T<\frac{\pi}{2\omega}$, and $E_0=0$. Other cases are either infeasible or can be handled using the same procedures.
To provide a unified expression for transmit power when $0\leq t_{\mathrm{req}}\leq \frac{2\pi}{\omega}-T$, we first investigate two specific cases for $t_{\mathrm{req}}$ as follows.

\subsubsection{Case 1: Task Starts from Orbital Noon}

We consider the scenario when $t_{\mathrm{req}}=0$ and derive the exact solution to (P1).
We start with finding $t^*_1$.
Since $E(t)$ is concave over $[0,\frac{\pi}{2\omega}]$, tangency between $E(t)$ and the computation-related segment is impossible. Therefore, the minimal feasible \(t_1\) occurs in either case as follows: (1) \emph{Immediate start}.
It is feasible if and only if \(\frac{P_0}{\omega}\sin(\omega t_2)\ \ge\ P_{\mathrm{comp}}\,t_2\),
and renders
$t_1^{\ast}\ =\ 0$. (2) \emph{Terminal touch} at \(t_1+t_2\).
It is feasible if \(\frac{P_0}{\omega}\sin(\omega t_2)\ <\ P_{\mathrm{comp}}\,t_2\).
In this case, a positive wait is necessary, and $t_1^{\ast}$ is characterized by tightness at the end of the computation window: \(E(t_1^{\ast}+t_2)=P_{\mathrm{comp}}t_2
\Longleftrightarrow
\sin\big(\omega(t_1^{\ast}+t_2)\big)=\frac{\omega P_{\mathrm{comp}}}{P_0}t_2\),
which yields
\begin{equation}
t_1^{\ast}\ =\ \frac{1}{\omega}\arcsin\!\Big(\frac{\omega P_{\mathrm{comp}}}{P_0}\,t_2\Big)\ -\ t_2,\label{t1 when t0=0}
\end{equation}
well-defined when $P_{\mathrm{comp}}t_2<\frac{P_0}{\omega}$ and $t_1^{\ast}\in\big[0,\frac{\pi}{2\omega}-t_2\big]$.
In summary, the optimal $t_1^{\ast}$ is given by
\begin{equation}
t_1^{\ast}\!=\!\begin{cases}
0, &\!\!\!\!\!  \frac{P_0}{\omega}\sin(\omega t_2) \!\ge\! P_{\mathrm{comp}}t_2,\\[-0.2ex]
 \frac{1}{\omega}\!\arcsin(\frac{\omega P_{\mathrm{comp}}}{P_0}t_2)\!-\!t_2, & \!\!\!\!\! \frac{P_0}{\omega}\sin(\omega t_2)\!<\!P_{\mathrm{comp}}t_2.
\end{cases}
\end{equation}

For $P_{\mathrm{comm}}^{\ast}(t)$, since $\bar E(t)$ is concave on $[a,b]$, its LCE is the affine function interpolating $(a,0)$ and $\big(b,\bar E(b)\big)$. Hence,
\begin{equation}
E^*_{\mathrm{comm}}(t)=\frac{t-a}{b-a}\,\bar E(b)=\frac{t-a}{b-a}\,\big[E(b)-P_{\mathrm{comp}}\,t_2\big].
\end{equation}
Taking the derivative yields the constant optimal power:
\begin{equation}
P_{\mathrm{comm}}^{\ast}(t)\!\equiv\!\frac{E(b)\!-\!P_{\mathrm{comp}}\,t_2}{b-a}=\frac{\tfrac{P_0}{\omega}\sin(\omega b)\!-\!P_{\mathrm{comp}}\,t_2}{b-a}.
\label{constant power policy}
\end{equation}
For some parameters, the optimal policy is shown in Fig.~\ref{case 1}.\footnote{Since the actual time window is too narrow to clearly observe power variations, we employ an extended time window to better visualize the solution. This scaling is intended purely for qualitative illustration rather than rigorous quantitative analysis. This also applies to Fig.~\ref{case 2}.}

\begin{figure}[t]
\centering
\subfigure[$t_{\mathrm{req}}=0$.]{
    \includegraphics[width=0.32\textwidth]{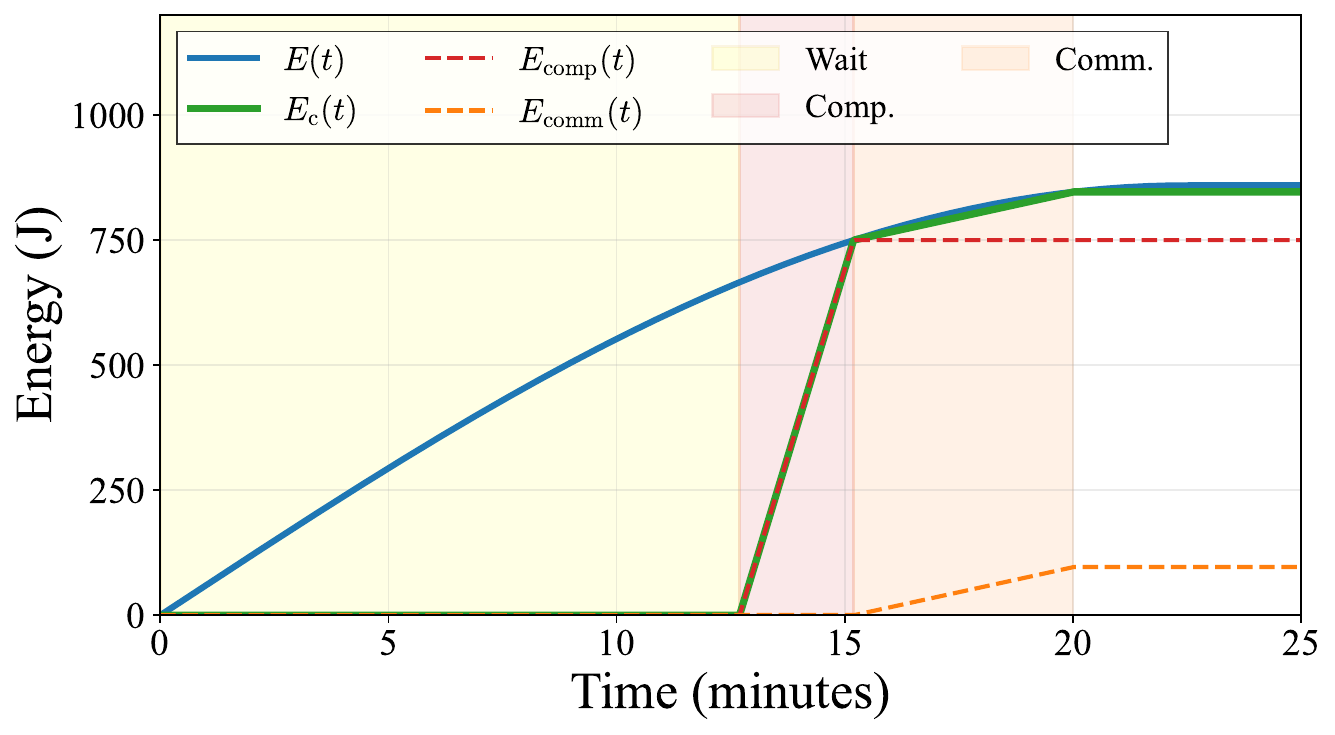}
    \label{case 1}
}
\hfill
\subfigure[$t_{\mathrm{req}}=\frac{3\pi}{2\omega}$.]{
    \includegraphics[width=0.32\textwidth]{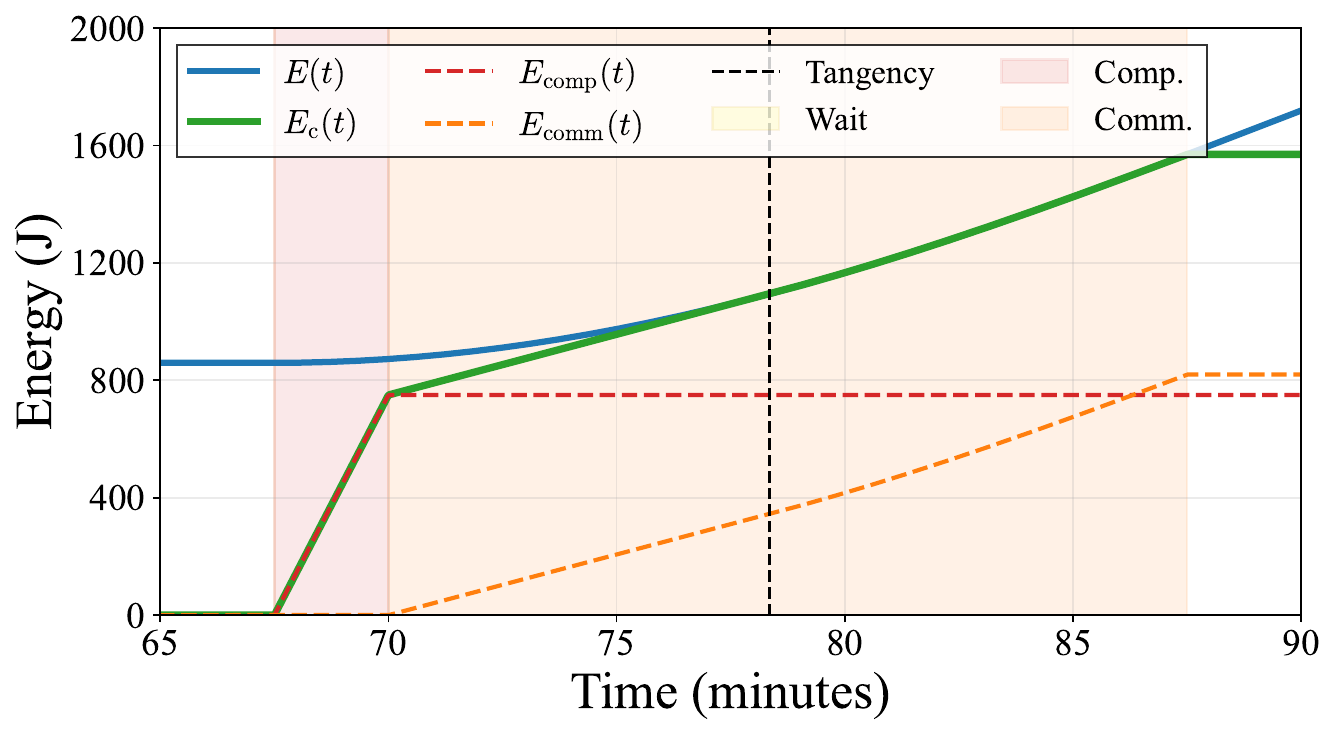}
    \label{case 2}
}
\caption{Energy evolution and time allocation under the optimal policy for two representative task-request times. 
Here, $E_c(t)=E_{\mathrm{comp}}(t)+E_{\mathrm{comm}}(t)$ denotes the total cumulative energy consumption.}
\captionsetup{justification=justified}
\label{fig:case_combined}
\vspace{-3ex}
\end{figure}

\subsubsection{Case 2: Task Starts from Orbital Dawn}

We consider the scenario when $t_{\mathrm{req}}=\frac{3\pi}{2\omega}$.
Under the assumption $P_{\mathrm{comp}}\,t_2<\tfrac{P_0}{\omega}$, the energy-causality constraint is satisfied even with $t_1=0$. Hence
$t_1^{\ast}\ =\ 0.$ 
For $P_{\mathrm{comm}}^{\ast}(t)$, we first investigate a tangent construction.
Define the slope of the supporting line anchored at $(a,0)$ as
\begin{equation}
s\ \triangleq \ \inf_{t\in(a,\,b]}\ \frac{\bar E(t)-0}{t-a}
\ =\ \inf_{t\in(a,\,b]}\ \frac{E(t)-P_{\mathrm{comp}}t_2}{t-a}.
\end{equation}
Since $\bar E$ is convex on $(a,\,b]$, the infimum is achieved at a unique point $\tau\in(a,\,b]$. If no interior tangency exists, the minimizer occurs at the boundary, and we set $\tau\triangleq b$.
When an interior tangency exists, $\tau$ satisfies the equal-slope condition \(\frac{\bar E(\tau)-0}{\tau-a}\ =\ \bar E'(\tau)\label{eq:tangent-condition}\).
With explicit form of $\bar E(t)$, it becomes
\begin{equation}
\frac{\tfrac{P_0}{\omega}\big[2
+\sin(\omega \tau)\big]-P_{\mathrm{comp}}t_2}{\tau-a}\ =\ P_0\cos(\omega \tau).
\label{eq:tangent-explicit}
\end{equation}
After transforming it to $P_0\big[2+\sin(\omega \tau)\big]-\omega P_{\mathrm{comp}}t_2-P_0\omega(\tau-a)\cos(\omega \tau)=0,$ it follows that the left-hand side is monotone; the equation can therefore be solved efficiently by bisection.
Therefore, $E_{\mathrm{comm}}^{\ast}(t)$ and the corresponding $P_{\mathrm{comm}}^{\ast}(t)$ are
\begin{equation}
E_{\mathrm{comm}}^{\ast}(t)\ =\
\begin{cases}
\displaystyle \frac{t-a}{\tau-a}\,\bar E(\tau), & t\in[a,\tau],\\[-0.2ex]
\bar E(t), & t\in[\tau,\,b],
\end{cases}
\end{equation}
\begin{equation}
P_{\mathrm{comm}}^{\ast}(t)=\begin{cases}
\displaystyle \frac{\bar E(\tau)}{\tau-a}=\frac{E(\tau)-P_{\mathrm{comp}}t_2}{\tau-a}, &\! t\in[a,\tau],\\[-0.2ex]
\bar E'(t)=P_0\cos(\omega t), &\! t\in(\tau,\,b].
\end{cases}
\label{eq:policy-piecewise}
\end{equation}
Note that when $t\in(\tau,\,b]$, $P_{\mathrm{comm}}^{\ast}(t)$ follows a \emph{cosine-power control}. If no interior tangency exists (i.e., $\tau=b$), $P_{\mathrm{comm}}^{\ast}(t)$ reduces to a constant-power policy $P_{\mathrm{comm}}^{\ast}(t)\equiv\frac{\bar E\!(b)}{b-a}$.
For some parameters, the optimal policy is shown in Fig.~\ref{case 2}.

\subsubsection{Optimal Solutions}

Based on the previous discussions of the two special cases, we have identified the basic structure of the solutions, which naturally extends to the general case.
We first examine $t^*_1$ w.r.t. different $t_{\mathrm{req}}$. From \eqref{t1 when t0=0}, define 
\begin{equation}
    \tau_0\triangleq \frac{1}{\omega}\arcsin\!\Big(\frac{\omega P_{\mathrm{comp}}}{P_0}\,t_2\Big)\ -\ t_2.\label{tau0}
\end{equation}
The causality constraint requires $t_{\mathrm{req}}+t_1\ge \tau_0$. If $\tau_0>0$, it implies $t_1\ge \tau_0-t_{\mathrm{req}}$. Otherwise, if $\tau_0\le 0$, then $t_{\mathrm{req}}+t_1\ge \tau_0$ always holds. Therefore, we have
\begin{equation}
    t_1^{\ast} = \max\{0,\ \tau_0 - t_{\mathrm{req}}\}.\label{t^*_1 general}
\end{equation}

Next, we examine $P_{\mathrm{comm}}^{\ast}(t)$. When $t_{\mathrm{req}}\leq \tfrac{3\pi}{2\omega}-T$, $E(t)$ is concave over $[a,b]$, constant power applies: $P_{\mathrm{comm}}^{\ast}(t)\equiv\frac{\bar E\!(b)}{b-a}$. When $\tfrac{3\pi}{2\omega}-T\leq t_{\mathrm{req}}\leq \tfrac{2\pi}{\omega}-T$, $E(t)$ is convex over $[a,b]$, and the tangent construction (case 2) applies, yielding \eqref{P^*_comm general}.
\begin{figure*}[!t]
\begin{equation}
P_{\mathrm{comm}}^{\ast}(t)\ =\
\begin{cases}
\displaystyle \frac{E(b)-P_{\mathrm{comp}}\,t_2}{\,b-a\,},\ \ t\in[a,b], \ \  & \text{if } t_{\mathrm{req}}\le \tfrac{3\pi}{2\omega}-T,\\[0pt]
\begin{cases}
\displaystyle \frac{E(\tau)-P_{\mathrm{comp}}\,t_2}{\,\tau-a\,}, & t\in[a,\tau],\\[-0.5ex]
\displaystyle P(t), & t\in(\tau,b),
\end{cases}
& \text{if } \tfrac{3\pi}{2\omega}-T\le t_{\mathrm{req}}\le \tfrac{2\pi}{\omega}-T.
\label{P^*_comm general}
\end{cases}
\end{equation}
\vspace{-5ex}
\end{figure*}

\subsection{Discussion}

\subsubsection{The Meaning of $\tau_0$}

The quantity \(\tau_0\) is a virtual threshold time defined by the
endpoint-tightness condition: \(E(\tau_0+t_2)=P_{\mathrm{comp}}t_2\).
It indicates the earliest computation-start threshold implied by the
harvested-energy curve. If \(\tau_0>0\), starting before \(\tau_0\)
violates the computation energy-causality constraint, and the satellite
must wait until \(\tau_0\). If \(\tau_0\le 0\), this threshold lies before
the considered task horizon, so immediate computation is already feasible.
Thus, \eqref{t^*_1 general} holds.

\subsubsection{Constant-Power Control Versus Cosine-Power Control}
\label{constant vs cosine}

The optimal power structure is determined by the convexity of \(E(t)\) over \([a,b]\). If \(E(t)\) is concave, i.e., \(t_{\mathrm{req}}\le \frac{3\pi}{2\omega}-T\), the LCE is the chord from \((a,0)\) to \((b,\bar E(b))\), yielding constant-power control. If \(E(t)\) is convex, i.e., \(\frac{3\pi}{2\omega}-T\le t_{\mathrm{req}}\le \frac{2\pi}{\omega}-T\), the policy is constant on \([a,\tau]\) and then follows cosine-power control, i.e., \(P_{\mathrm{comm}}^\ast(t)=P(t)=P_0\cos(\omega t)\) on \((\tau,b)\). This cosine-power control is a distinctive feature of the considered solar-EH-driven space AI system, where the optimal transmit power can track the orbital solar-EH profile. Nevertheless, constant-power control applies in most cases, covering approximately three quarters of the task-start range.

\subsubsection{Extension to a General Solar-EH Profile}

Although the previous subsection evaluates a representative profile for demonstration, the properties established in Sec.~\ref{solution structure} hold for any solar-EH profile. For example, for an arbitrary solar panel normal vector $\mathbf{n} = [n_x, n_y, n_z]^T$, the solar-EH profile is
\begin{equation}
\begin{split}
    P(t) = &\eta \cdot \gamma \cdot A \cdot \max( -n_x \cos\beta \sin(\omega t)\autobreak{}\quad- n_y \sin\beta - n_z \cos\beta \cos(\omega t), 0).
\end{split}\label{output power}
\end{equation}
See Appendix \ref{derivation of P(t)} for the derivation. For a sanity check, substituting $\mathbf{n}=[0,0,-1]^T$ into \eqref{output power} recovers \eqref{EH power}. The corresponding $E(t)$ can be obtained by integrating $P(t)$. The solution derivations under such profiles are omitted for brevity. 

\section{Joint C$^2$ Utility Optimization under Solar EH}\label{joint C2}
\label{Utility Maximization}

 In this section, we aim to optimize the energy and time resources between computation and communication (C$^2$) under solar EH.

\subsection{Problem Formulation}

\subsubsection{Metric Design}

We consider two critical yet inherently conflicting objectives for space generative AI (see Section~\ref{metrics}): (onboard) CLIP score and communication throughput. 
To evaluate the overall performance, we define the \emph{joint C$^2$ utility} as a composite metric of these two terms:
\begin{equation}
    U = S + \lambda R,
\end{equation}
where $\lambda$ is a balancing coefficient. 

\subsubsection{Approximation of CLIP Score}

\begin{figure}[t]
\centering
{\includegraphics[width=0.32\textwidth]{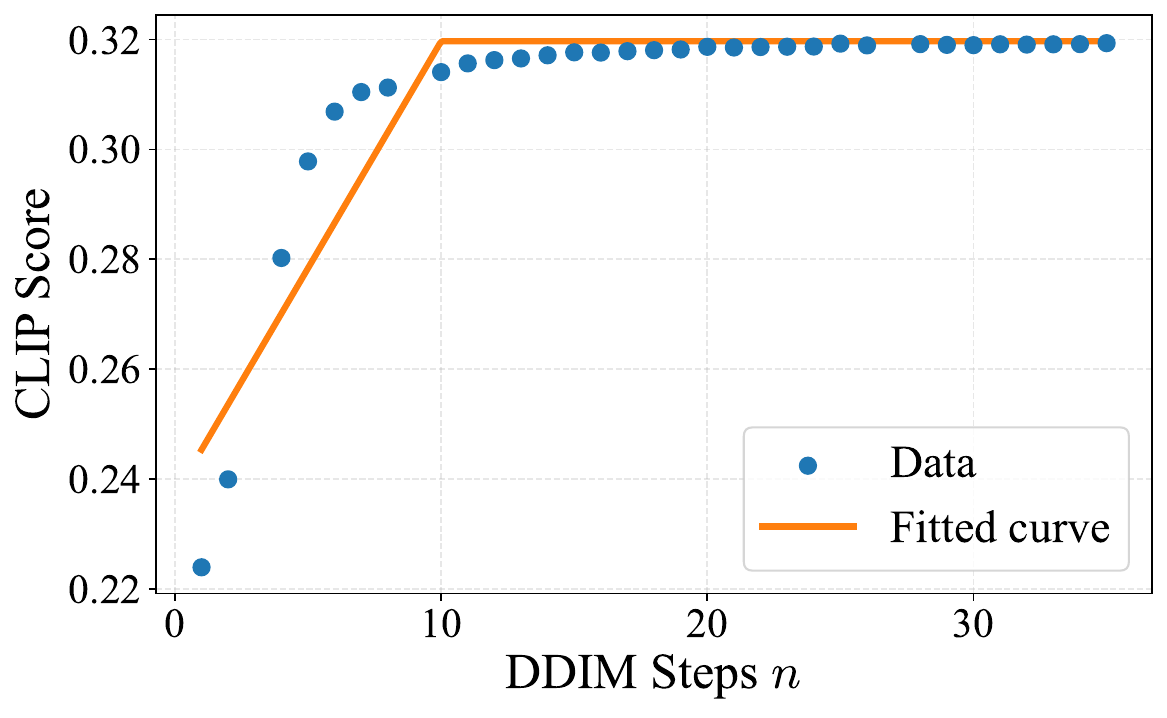}}
\caption{CLIP score versus DDIM steps.}
\captionsetup{justification=justified}
\label{s(n)}
\vspace{-2ex}
  \end{figure}

As shown in Fig.~\ref{s(n)}, the CLIP score increases rapidly at early DDIM steps, while additional steps provide only marginal gains. Motivated by this saturation behavior, we restrict the DDIM step $n$ to the effective set $\mathcal{N}\triangleq
\{n_{\min},n_{\min}+1,\ldots,n_{\max}\}$, and approximate the onboard CLIP score within this set as
\begin{equation}
    S(n) \approx c_3 n + c_4,\label{approx S(n)}
\end{equation}
where $c_3>0$ and $c_4$ are fitting coefficients.

\subsubsection{Problem Description}

We aim to maximize the joint C$^2$ utility by optimizing the DDIM steps, waiting time, and power control under the given solar-EH profile. A critical constraint is energy causality, ensuring that total energy consumption never exceeds the cumulative harvested energy. Let $a(n) \triangleq t_{\mathrm{req}} + t_1 + t_2(n)$ denote the start of the communication phase. 
Accordingly, we formulate the joint optimization over the DDIM step
$n\in\mathcal{N}$, waiting time $t_1$, and transmit-power policy
$P_{\mathrm{comm}}(\cdot)$ as follows:
\begin{subequations}
\begin{align}
\mathrm{(P3)}\max_{n,t_1, P_{\mathrm{comm}}(\cdot)}  &\;
c_3 n + c_4+\lambda\int_{a(n)}^{b}
\phi\!\big(P_{\mathrm{comm}}(t)\big)\, \mathrm{d}t\label{opt goal P2}
\\[0.8ex]
\text{s.t.}  &
\;P_{\mathrm{comp}}(t - t_{\mathrm{req}} - t_1) \le E(t),
 \\ &\qquad \forall t \in [t_{\mathrm{req}}+t_1, a(n)],\label{comp constraint P2}
\\[0.6ex]
 &
\int_{a(n)}^{t} \!\!\!\!\!\!P_{\mathrm{comm}}\!(\tau) \mathrm{d}\tau \!\le\! E\!(t)\!-\!P_{\mathrm{comp}} t_2\!(n), \\&\qquad \forall t \in [a(n), b].\label{comm constraint P2}
\end{align}\label{P2}
\end{subequations}

\subsection{Problem Decomposition}

The main difficulty of (P3) lies in the coupling among computation, communication, and solar EH. Computation and communication draw from the same solar-EH-powered energy buffer, creating a time-causal resource conflict between the two stages. To this end, we decompose (P3) with respect to the DDIM step and characterize the resulting optimality structure in the following proposition.

\begin{proposition}[Decomposition of (P3)]
\label{prop:P3_decomposition}
For any fixed \(n\in\mathcal{N}\), let \(R^{\ast}(n)\) denote the maximum throughput obtained by solving the communication-throughput maximization problem (P1) with \(t_2=t_2(n)\). Then, the optimal DDIM step of (P3) is given by
\begin{equation}
    n^{\ast}
    =    \arg\max_{n\in\mathcal{N}}
    \left\{
    c_3n+c_4+\lambda R^{\ast}(n)
    \right\}.
\end{equation}
\end{proposition}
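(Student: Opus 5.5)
The plan is the standard nested-maximization argument: since $n$ ranges over the finite set $\mathcal{N}$, we write the optimal value of (P3) as an outer maximum over $n$ of an inner maximum over $(t_1,P_{\mathrm{comm}}(\cdot))$. For fixed $n$, the term $c_3n+c_4$ is a constant, and $\lambda>0$ is a positive scale factor. The inner problem is therefore
\[
\max_{t_1,P_{\mathrm{comm}}(\cdot)} \int_{a(n)}^{b}\phi\big(P_{\mathrm{comm}}(t)\big)\,\mathrm{d}t
\quad\text{s.t. \eqref{comp constraint P2}, \eqref{comm constraint P2}.}
\]

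\textbf{Step 1: identify the inner problem with (P1).} With $t_2=t_2(n)$ from \eqref{t2}, the constraints \eqref{comp constraint P2} and \eqref{comm constraint P2} are literally \eqref{comp constraint P1} and \eqref{comm constraint P1}. The objective matches \eqref{opt goal P1} because $r(t)=\phi(P_{\mathrm{comm}}(t))$ under the constant-gain assumption. Hence the feasible set of the inner problem for a given $n$ is exactly the feasible set of (P1) with $t_2=t_2(n)$, and its optimal value is $c_3n+c_4+\lambda R^{\ast}(n)$.

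\textbf{Step 2: exchange of maxima.} The feasible set of (P3) is the disjoint union over $n\in\mathcal{N}$ of the fixed-$n$ feasible sets. Therefore
\[
\sup\mathrm{(P3)}=\max_{n\in\mathcal{N}}\sup_{t_1,P_{\mathrm{comm}}}\{\cdot\}=\max_{n\in\mathcal{N}}\{c_3n+c_4+\lambda R^{\ast}(n)\}.
\]
The outer maximum is attained because $\mathcal{N}$ is finite. If $(n^{\ast},t_1^{\ast},P^{\ast}_{\mathrm{comm}})$ is optimal for (P3), then $n^{\ast}$ attains this outer maximum. Conversely, any maximizer $n^{\ast}$, paired with the (P1)-optimal $t_1^{\ast}$ from Proposition~\ref{prop t1} and $P^{\ast}_{\mathrm{comm}}$ from Theorem~\ref{theorem 1}, is feasible for (P3) and achieves the optimal value.

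\textbf{Main obstacle: attainment and infeasibility.} The delicate point is ensuring that $R^{\ast}(n)$ is actually attained, and handling values of $n$ for which (P1) is infeasible.

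For attainment, I would cite Theorem~\ref{theorem 1}: it gives an optimal LCE policy whenever the feasible set is nonempty.

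For infeasibility, no $t_1\in[0,T-t_2(n)]$ may satisfy computation causality. In that case I would set $R^{\ast}(n)=-\infty$, so such $n$ are never selected. Under the paper's standing assumption $P_{\mathrm{comp}}t_2<P_0/\omega$, at least some $n$ are feasible.

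If $\lambda=0$ the statement is trivial. If $\lambda$ could be negative, the inner problem would be a minimization rather than (P1), so I would state the assumption $\lambda\ge 0$ explicitly.
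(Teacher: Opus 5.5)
Your proposal is correct and follows the same route as the paper's one-line proof: fix $n$, identify the inner problem over $(t_1,P_{\mathrm{comm}}(\cdot))$ with (P1) at $t_2=t_2(n)$, then maximize $c_3n+c_4+\lambda R^{\ast}(n)$ over the finite set $\mathcal{N}$, and you additionally make explicit what the paper leaves implicit (attainment of the inner optimum via Theorem~\ref{theorem 1}, the requirement $\lambda\ge 0$, and a convention for infeasible $n$). One minor correction: the assumption $P_{\mathrm{comp}}t_2<P_0/\omega$ does not by itself guarantee that some $n$ is feasible within the window (e.g., with $t_{\mathrm{req}}=0$ and $E_0=0$, every $n$ is infeasible if $\tfrac{1}{\omega}\arcsin\bigl(\omega P_{\mathrm{comp}}t_2(n_{\min})/P_0\bigr)>T$), but the decomposition does not need this, since the statement presupposes that (P3) has an optimal solution.
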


\begin{proof}
For each fixed $n$, (P3) reduces to (P1); maximizing $c_3n+c_4+\lambda R^*(n)$ over $n\in\mathcal{N}$ yields the result.
\end{proof}

This decomposition reduces the original mixed discrete-continuous problem to an outer one-dimensional search over \(n\in\mathcal{N}\) and an inner communication-throughput maximization problem. The resulting search over \(\mathcal{N}\), referred to as \emph{exhaustive search}, yields an exact solution to (P3). More importantly, the decomposition separates the role of computation from the communication-control subproblem, thereby providing the basis for the closed-form analysis in the sequel.

\subsection{Closed-form Analysis}

Although exhaustive search can solve (P3) exactly over the discrete set \(\mathcal N\), such a search-based solution provides limited insight into how the optimal DDIM step $n$ depends on system parameters. It also requires repeatedly solving the inner communication problem for all feasible $n$. To reveal the structure of the \(C^2\) trade-off and enable lightweight onboard adaptation, we analyze the dominant constant-power regime, in which the reduced problem admits a closed-form characterization. Specifically, we consider the following problem:
\begin{equation}
    \mathrm{(P4)} \quad\max_{\,n\in\mathcal{N}} \quad c_3 n + c_4 + \lambda  R(n),\label{P4}
\end{equation}
where \(R(n) = \big(b-a(n)\big)\,\phi\big(P_{\mathrm{comm}}(n)\big)\). Here, \(P_{\mathrm{comm}}(n) \;=\; \frac{E_b - P_{\mathrm{comp}}\, t_2(n)}{\,b-a(n)\,}\) with \(E_b\triangleq E(b)\).
For analytical tractability, we relax \(n\) from the discrete set \(\mathcal N\) to the interval \([n_{\min},n_{\max}]\), and later recover the integer solution by evaluating the nearest feasible DDIM steps. We set \(t_1=0\) and impose the feasibility conditions \(E_{\mathrm{comp}}(a(n_{\max}))<E(a(n_{\max}))\) and \(a(n)<b\), which ensure sufficient computation energy and a non-empty communication window, respectively. 

We first propose the following proposition to describe the properties of $R(n)$. 

\begin{proposition}[Properties of $R(n)$]\label{prop R(n)}
    $R(n)$ is concave and decreases monotonically w.r.t. $n$.
\end{proposition}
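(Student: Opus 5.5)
The plan is to write $R(n)$ as the perspective of the concave rate function $\phi$, evaluated along an affine curve in $n$. Concavity then follows from a standard convex-analysis fact, and monotonicity from a one-line derivative sign argument.

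\textbf{Affine structure.} With $t_1=0$ and $t_2(n)=c_1n+c_2$ from \eqref{t2}, I will introduce two functions of $n$:
\[
L(n)\triangleq b-a(n)=T-c_1n-c_2, \qquad X(n)\triangleq E_b-P_{\mathrm{comp}}(c_1n+c_2).
\]
Both are affine in $n$, with slopes $L'=-c_1<0$ and $X'=-P_{\mathrm{comp}}c_1<0$. In this notation $P_{\mathrm{comm}}(n)=X(n)/L(n)$ and $R(n)=L(n)\,\phi\big(X(n)/L(n)\big)$.

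\textbf{Positivity.} On $[n_{\min},n_{\max}]$ the imposed feasibility conditions give $L(n)>0$. They also give $X(n)>0$, via the chain
\[
E_b=E(b)\ge E(a(n))\ge E(a(n_{\max}))>E_{\mathrm{comp}}(a(n_{\max}))=P_{\mathrm{comp}}t_2(n_{\max})\ge P_{\mathrm{comp}}t_2(n).
\]
This uses that $E(t)$ is nondecreasing and $t_2$ is increasing in $n$, so the power is well defined and positive.

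\textbf{Concavity.} The function $\phi(P)=B\log_2(1+\alpha P)$ is concave on $[0,\infty)$. Its perspective $h(x,\ell)=\ell\,\phi(x/\ell)$ is therefore jointly concave on $\{\ell>0\}$. I would either cite this standard fact or verify it directly: the Hessian of $h$ equals $\frac{\phi''(x/\ell)}{\ell}\,(1,-x/\ell)^T(1,-x/\ell)\preceq 0$. Since $n\mapsto (X(n),L(n))$ is affine, the composition $R(n)=h(X(n),L(n))$ is concave. As a sanity check one can compute directly
\[
R''(n)=\phi''(P)\,\frac{(X'L-XL')^2}{L^3}\le 0 .
\]

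\textbf{Monotonicity.} Differentiating gives
\[
R'(n)=L'\big[\phi(P)-P\phi'(P)\big]+X'\phi'(P),\qquad P=P_{\mathrm{comm}}(n).
\]
Concavity of $\phi$ together with $\phi(0)=0$ yields $\phi(P)\ge P\phi'(P)$. We also have $\phi'(P)=\frac{B\alpha}{(1+\alpha P)\ln 2}>0$. Since $L'<0$ and $X'<0$, both terms of $R'(n)$ are nonpositive and the second is strictly negative, so $R(n)$ is strictly decreasing.

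\textbf{Main obstacle.} The only delicate step is the positivity argument, i.e., confirming that the feasibility assumptions really give $X(n)>0$ and $L(n)>0$ on the whole relaxed interval, so that the perspective is evaluated in its domain. Once that holds, concavity is the standard perspective-function argument and monotonicity is the sign check above.
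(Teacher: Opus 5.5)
Your proof is correct, but it takes a different route from the paper. The paper substitutes $u=b-a(n)=T-t_1-c_1n-c_2$ and $\Delta_E=E_b-P_{\mathrm{comp}}(T-t_1)$, so that $P_{\mathrm{comm}}=P_{\mathrm{comp}}+\Delta_E/u$ and $R=f(u)=Bu\log_2(1+\alpha P_{\mathrm{comp}}+\alpha\Delta_E/u)$. It then differentiates twice by hand to obtain $f''(u)=-\frac{B\alpha^2\Delta_E^2}{\ln 2}\,u^{-3}\bigl(1+\alpha P_{\mathrm{comp}}+\alpha\Delta_E/u\bigr)^{-2}\le 0$. Monotonicity is deduced from concavity, via $f'(u)\ge\lim_{u\to\infty}f'(u)=B\log_2(1+\alpha P_{\mathrm{comp}})>0$, together with the fact that $u$ decreases in $n$.

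You instead recognize $R$ as the perspective of $\phi$ along the affine path $n\mapsto(X(n),L(n))$, so concavity follows from the joint concavity of the perspective, whose Hessian is the rank-one negative semidefinite matrix you wrote. Your $R''(n)=\phi''(P)(X'L-XL')^2/L^3$ is exactly the paper's $c_1^2f''(u)$, because $X'L-XL'=c_1\Delta_E$. For monotonicity you use the pointwise tangent inequality $\phi(P)\ge P\phi'(P)$ instead of a limit at infinity.

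What each route buys:
\begin{itemize}
\item Your route avoids the quotient-rule algebra and explains structurally why the factor $\Delta_E^2$ appears, and hence the affine degenerate case $\Delta_E=0$.
\item It separates the two mechanisms behind the decrease: a shorter window ($L'<0$) and less residual energy ($X'<0$).
\item It applies to any increasing concave rate function with $\phi(0)=0$.
\item The paper's explicit computation yields the closed form of $f'(u)$, which it reuses directly to set up the Lambert-$W$ stationarity equation in Theorem~\ref{thm:optimal_n}.
\end{itemize}

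Your positivity check, which the paper omits, needs one fix. Since $a(\cdot)$ is increasing and $E$ is nondecreasing, $E(a(n))\le E(a(n_{\max}))$, not $\ge$. Drop that middle term and use $E(b)\ge E(a(n_{\max}))$ directly, which is valid because $a(n_{\max})<b$. The rest of your chain then gives $X(n)>0$.
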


\begin{proof}
    See Appendix \ref{proof prop 2}.
\end{proof}

    The monotonicity of $R(n)$ reflects the fundamental resource trade-off: increasing  $n$ consumes more energy, thereby reducing the power available for transmission. Furthermore, the concavity of $R(n)$ implies that the marginal degradation in throughput accelerates as energy becomes scarcer.

\begin{corollary}[Concavity of $U(n)$] \label{cor:U_concavity}
    The utility function $U(n)$ is concave w.r.t. $n$.
\end{corollary}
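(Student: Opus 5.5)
The plan is to combine Proposition~\ref{prop R(n)} with the linear CLIP approximation \eqref{approx S(n)} and use the fact that concavity is preserved under nonnegative scaling and the addition of affine functions. In the relaxed problem (P4), the utility is
\begin{equation*}
U(n) = c_3 n + c_4 + \lambda R(n), \qquad n\in[n_{\min},n_{\max}].
\end{equation*}
This is the sum of an affine term $c_3 n + c_4$ and the scaled throughput $\lambda R(n)$.

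First, note that the affine part has zero second derivative, so it is both concave and convex. Second, invoke Proposition~\ref{prop R(n)}: under the feasibility conditions $a(n)<b$ and sufficient computation energy, $R(n)$ is concave on $[n_{\min},n_{\max}]$. Since the balancing coefficient satisfies $\lambda\ge 0$, which is implicit in the design of $U$ because throughput is a quality factor to be rewarded, $\lambda R(n)$ is concave as well. Third, the sum of concave functions is concave, which gives the claim.

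If a derivative-based statement is preferred, I would write $U''(n)=\lambda R''(n)\le 0$, reusing the second-derivative computation of $R(n)$ from Appendix~\ref{proof prop 2}. For completeness, I would spell out that $R''(n)\le 0$ follows from writing $R(n)=L(n)\,\phi\big((E_b-P_{\mathrm{comp}}t_2(n))/L(n)\big)$ with $L(n)=b-a(n)$ affine in $n$. This is the perspective of the concave function $\phi$ evaluated at an affine numerator, and such a perspective is concave.

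The only real obstacle is the sign of $\lambda$, together with making sure the domain is the relaxed interval on which $a(n)<b$ holds, so that $R$ is well defined and concave. Neither requires more than stating the assumptions. A consequence worth noting afterward is that the stationarity condition $c_3+\lambda R'(n)=0$ characterizes the unique maximizer, to be clipped to $[n_{\min},n_{\max}]$.
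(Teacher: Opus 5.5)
Your proposal is correct and is essentially the paper's argument: the paper notes $S''(n)=0$ and $\lambda>0$, so $U''(n)=\lambda R''(n)\le 0$ by Proposition~\ref{prop R(n)}. Your perspective-function remark is a valid one-line alternative to the appendix's explicit computation of $R''$. One nit concerns your closing aside: uniqueness of the stationary maximizer needs strict concavity, which fails in the degenerate case $\Delta_E=0$, where $U$ is affine.
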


\begin{proof}
Since $S''(n)=0$ and $\lambda>0$,
\[
U''(n)=\lambda R''(n)\le0.
\]
Hence, $U(n)$ is concave.
\end{proof}

Fig.~\ref{tradeoff} illustrates the result of Corollary~\ref{cor:U_concavity} and the fundamental \(C^2\) trade-off. As the DDIM step \(n\) increases, \(S(n)\) improves due to greater computational investment. However, this consumes more energy and reduces the budget available for transmission, thereby degrading the throughput \(R(n)\). As a result, \(U(n)\) exhibits unimodal behavior, increasing initially before declining. The maximizer therefore represents the optimal balance between computation and communication, and shifts as the weighting parameter \(\lambda\) varies. Corollary~\ref{cor:U_concavity} enables the following closed-form
characterization.
\begin{figure}[t]
\centering
{\includegraphics[width=0.32\textwidth]{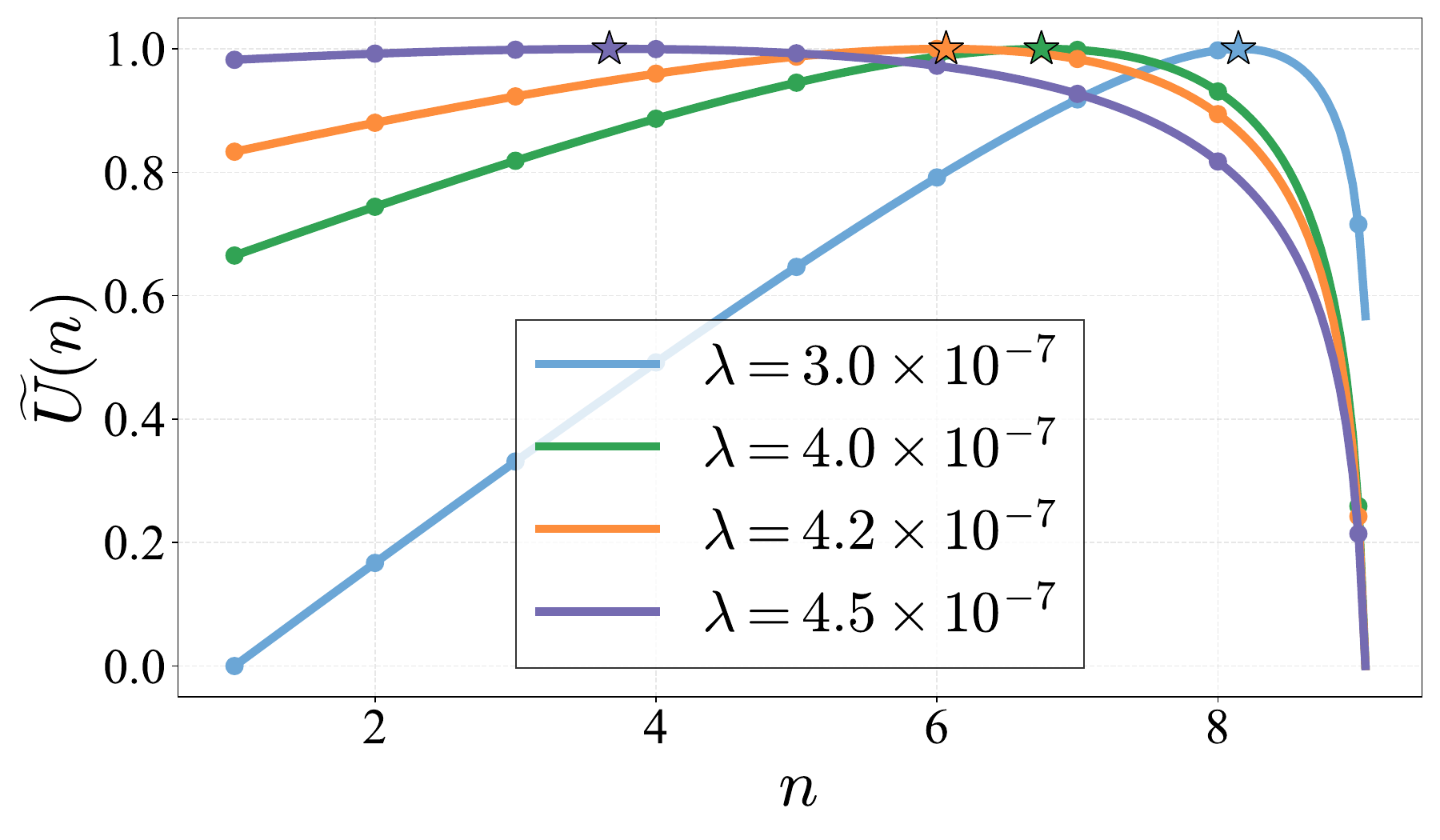}}
\caption{The \(C^2\) trade-off. The normalized utility, \(\tilde{U}(n)\), is plotted.}
\captionsetup{justification=justified}
\label{tradeoff}
\vspace{-2ex}
  \end{figure}

\begin{theorem}[Closed-form Step Selection] \label{thm:optimal_n}
    In the dominant constant-power regime, under the feasibility conditions stated above and assuming that a real stationary point exists, equation $U'(n)=0$ admits the following solution: \begin{equation}\label{eq:optimal_n_cont}
    \tilde{n} = 
    \begin{cases}
        a_1 + a_2\frac{\alpha \Delta_E}{D} \frac{ W_0\bigl(-D e^{-C}\bigr) }{ W_0\bigl(-D e^{-C}\bigr) + 1 }, & \text{if } \Delta_E > 0, \\[0pt]
        a_1 + a_2\frac{\alpha \Delta_E}{D} \frac{ W_{-1}\bigl(-D e^{-C}\bigr) }{ W_{-1}\bigl(-D e^{-C}\bigr) + 1 }, & \text{if } \Delta_E < 0,
    \end{cases}
    \end{equation}
    where $a_1 \triangleq \frac{T - t_1 - c_2}{c_1}$, $a_2 \triangleq \frac{1}{c_1}$, $C \triangleq 1 + \frac{a_2c_3}{\lambda B} \ln 2$, $D \triangleq 1 + \alpha P_{\mathrm{comp}}$, and $\Delta_E \triangleq  E_b - \Pcomp (T-t_1)$. $W_k(\cdot)$ denotes the Lambert $W$ function with branch $k$~\cite{corless1996lambert}.
    The solution $n^*$ to problem (P4) is given by:
    \(n^* = \operatorname*{arg\,max}_{n \in \mathcal{S}} U(n),\)
    where \(\mathcal{S} \triangleq \left\{\lfloor \left[ \tilde{n}  \right]_{n_{\min}}^{n_{\max}}\rfloor, \; \lceil \left[ \tilde{n} \right]_{n_{\min}}^{n_{\max}} \rceil \right\}\) is the candidate set.
    Here, $[x]_a^b \triangleq \min(\max(x, a), b)$ denotes the clipping operator. $\lfloor \cdot \rfloor$ and $\lceil \cdot \rceil$ denote the floor and ceiling functions, respectively. The degenerate case $\Delta E=0$, for which $U(n)$ becomes affine, is omitted for brevity.
\end{theorem}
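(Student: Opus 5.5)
The plan is to reduce $U'(n)=0$ to an equation of the form $u e^{u}=\text{const}$ by a suitable change of variables, and then invert it with the Lambert $W$ function. The branch is fixed by the sign of $\Delta_E$, and the integer solution follows from Corollary~\ref{cor:U_concavity}.

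\textbf{Step 1 (reparametrize by the window length).} Define the communication-window length
\[
L(n)\triangleq b-a(n)=T-t_1-c_2-c_1n=c_1(a_1-n),
\]
which is positive by the feasibility condition $a(n)<b$, with $\mathrm{d}L/\mathrm{d}n=-c_1$. The numerator of $P_{\mathrm{comm}}(n)$ can be rewritten as
\[
E_b-P_{\mathrm{comp}}(c_1n+c_2)=\Delta_E+P_{\mathrm{comp}}L(n).
\]
Hence $P_{\mathrm{comm}}(n)=P_{\mathrm{comp}}+\Delta_E/L$. Setting
\[
x\triangleq 1+\alpha P_{\mathrm{comm}}=D+\frac{\alpha\Delta_E}{L},
\]
the throughput becomes $R=B\,L\log_2 x$, where $x>0$ because the transmit power is nonnegative.

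\textbf{Step 2 (stationarity condition).} Differentiate $R$ in $L$, using $\mathrm{d}x/\mathrm{d}L=-(x-D)/L$:
\[
\frac{\mathrm{d}R}{\mathrm{d}L}=\frac{B}{\ln 2}\Big(\ln x-\frac{x-D}{x}\Big).
\]
By the chain rule, $U'(n)=c_3-\lambda c_1\,\mathrm{d}R/\mathrm{d}L$. Setting this to zero and using $a_2=1/c_1$ gives
\[
\ln x+\frac{D}{x}-1=\frac{a_2c_3\ln 2}{\lambda B}=C-1,
\qquad\text{i.e.}\qquad \ln x+\frac{D}{x}=C.
\]

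\textbf{Step 3 (Lambert inversion).} Substitute $u\triangleq -D/x$. Then $x=e^{C}e^{u}$ and $x=-D/u$, which combine to
\[
u e^{u}=-De^{-C},
\qquad\text{so}\qquad u=W_k\bigl(-De^{-C}\bigr).
\]
Back-substitute in turn:
\[
\frac{\alpha\Delta_E}{L}=x-D=-\frac{D(1+u)}{u},
\qquad
L=-\frac{\alpha\Delta_E\,u}{D(1+u)},
\]
\[
n=a_1-\frac{L}{c_1}=a_1+a_2\frac{\alpha\Delta_E}{D}\,\frac{u}{u+1},
\]
which is exactly \eqref{eq:optimal_n_cont}.

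\textbf{Step 4 (branch selection).} This is the delicate step. Since $L>0$, $\operatorname{sgn}(x-D)=\operatorname{sgn}(\Delta_E)$.
\begin{itemize}
\item If $\Delta_E>0$, then $x>D>0$, so $u=-D/x\in(-1,0)$. This range is attained only by the principal branch, so $k=0$.
\item If $\Delta_E<0$, then $0<x<D$, so $u<-1$. This range is attained only by the branch $W_{-1}$.
\end{itemize}
The assumption that a real stationary point exists supplies $-De^{-C}\ge -1/e$, so the relevant branch is real-valued. The sign conditions must be checked carefully, because they are what rule out the spurious branch.

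\textbf{Step 5 (integer recovery).} By Corollary~\ref{cor:U_concavity}, $U$ is concave on $[n_{\min},n_{\max}]$. The continuous maximizer is therefore $\tilde n$ clipped to $[n_{\min},n_{\max}]$. A concave function restricted to the integers is maximized at one of the two integers adjacent to the continuous maximizer. This gives $n^\ast=\arg\max_{n\in\mathcal S}U(n)$.
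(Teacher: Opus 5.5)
Your proposal is correct and follows essentially the same route as the paper's appendix proof. Both use the window-length substitution, reduce the stationarity condition to $\ln x + D/x = C$, invert via Lambert $W$ (your $u=-D/x$ is the paper's $-y$), select the branch by the sign of $\Delta_E$, and recover the integer solution by concavity. Your branch-selection step is a slightly more explicit version of the paper's argument: it uses $L>0$ to obtain $\operatorname{sgn}(x-D)=\operatorname{sgn}(\Delta_E)$ and hence the range of $u$.
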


\begin{proof}
    See Appendix~\ref{app:proof_thm2}.
\end{proof}

The condition $\Delta_E \gtrless 0$ creates a physical boundary equivalent to $P_{\mathrm{comm}} \gtrless P_{\mathrm{comp}}$, which can be verified by
    \[\Delta_E \gtrless 0 \iff E_b \gtrless P_{\mathrm{comp}}(T-t_1) \iff P_{\mathrm{comm}} \gtrless P_{\mathrm{comp}}.\]
    This distinction is pivotal because the Lambert $W$ function branches, $W_0$ and $W_{-1}$, exhibit opposing monotonicity. Consequently, \(n^*\) responds differently to system parameters in the communication- and computation-dominant power regimes.

\begin{remark}[Complexity Analysis]
The closed-form rule in Theorem~\ref{thm:optimal_n} reduces online decision-making to evaluating at most two candidate integers around \(\tilde n\). Therefore, its online complexity is \(O(1)\). In contrast, the exact exhaustive-search solution requires evaluating all feasible DDIM steps in \(\mathcal N\), leading to a complexity of \(O(|\mathcal N|)\). 
\end{remark}

\section{Experimental Results}
\label{Experimental Results}

\subsection{Experimental Settings}

\begin{table}[htbp]
\small
    \centering
    \caption{System Parameters}
    \label{tab:simulation_parameters}
    \renewcommand{\arraystretch}{0.88}
    \begin{tabular}{llc}
        \toprule
        \textbf{Parameter} & \textbf{Symbol} & \textbf{Value} \\
        \midrule
        Orbital angular velocity & $\omega$ & $2\pi/5400$ rad/s \\
        Computation power & $P_{\text{comp}}$ & $300$ W \\
        Channel bandwidth & $B$ & $5 \times 10^3$ Hz \\
        Noise power spectral density & $N_0$ & $-174$ dBm/Hz \\
        Additional attenuation factor & $L_{\mathrm{AL}}$ & $-5$ dB \\
        Transmit/receive antenna gain & $G_{\mathrm{tx}}$ and $G_{\mathrm{rx}}$  & $53$ dB and $0$ dB \\   
        Carrier frequency & $f_c$  & $12$ GHz\\
        Task execution window & $T$ & $3$ s \\
        Energy conversion efficiency & $\eta$ & $0.19$ \\
        Solar irradiance constant & $\gamma$ & $1353$ W/m$^2$ \\
        Solar panel area & $A$ & $2.0$ m$^2$ \\
        Weighting parameter & $\lambda$ & $4.3869 \times 10^{-7}$ \\
        Minimum DDIM step & $n_{\text{min}}$ & $4$ \\
        Maximum DDIM step & $n_{\text{max}}$ & $10$ \\
        Initial energy at task arrival & $E_0$ & $400$ J \\
        \bottomrule
    \end{tabular}
\end{table}

\begin{enumerate}
    \item \textit{System and communication settings:} We investigate a space generative AI framework, and the relevant system parameters are summarized in Table \ref{tab:simulation_parameters}. The communication and energy-harvesting settings largely follow the specifications in~\cite{chen2024fedmeld} and~\cite{yang2016towards}, respectively.
    \item \textit{Metrics:} To evaluate the proposed framework, we adopt the following two metrics: (1) \emph{Communication Throughput}: It is defined as the total number of bits delivered over the transmission window [a,b], and is used to quantify communication performance. (2) \emph{E2E CLIP Score}: We measure the semantic similarity between the original text prompt and the image received at the ground device\cite{radford2021learning}, to indicate the overall E2E generation quality. The reported metrics are averaged across diverse channel realizations and multiple queries.
    \item \textit{Benchmarking schemes:} We consider two categories of benchmarks. The first category is used to validate the proposed solver, while the second category evaluates the E2E performance gains of our joint \(C^2\) control.
\begin{itemize}
    \item \emph{Solver benchmark:} We compare the proposed closed-form solution with exhaustive search over the feasible set \(\mathcal{N}=\{n_{\min},\ldots,n_{\max}\}\). It verifies that the closed-form solution guarantees optimality while avoiding the \(O(|\mathcal{N}|)\) search complexity.
    \item \emph{Computation-centric scheme:} It enforces a fixed, high DDIM step $n=n_\text{max}$ to prioritize image fidelity, potentially at the expense of transmission reliability in energy-scarce regimes.
    \item \emph{Communication-centric scheme:} It prioritizes energy allocation for the downlink transmission to ensure high throughput, while only guaranteeing the minimal required DDIM step $n=n_\text{min}$.
\end{itemize}
\end{enumerate}

\subsection{Validation of Throughput Maximization (P1)}

\begin{figure}[t]
\centering
\subfigure[Average communication throughput versus $A$.]{\includegraphics[width=0.32\textwidth]{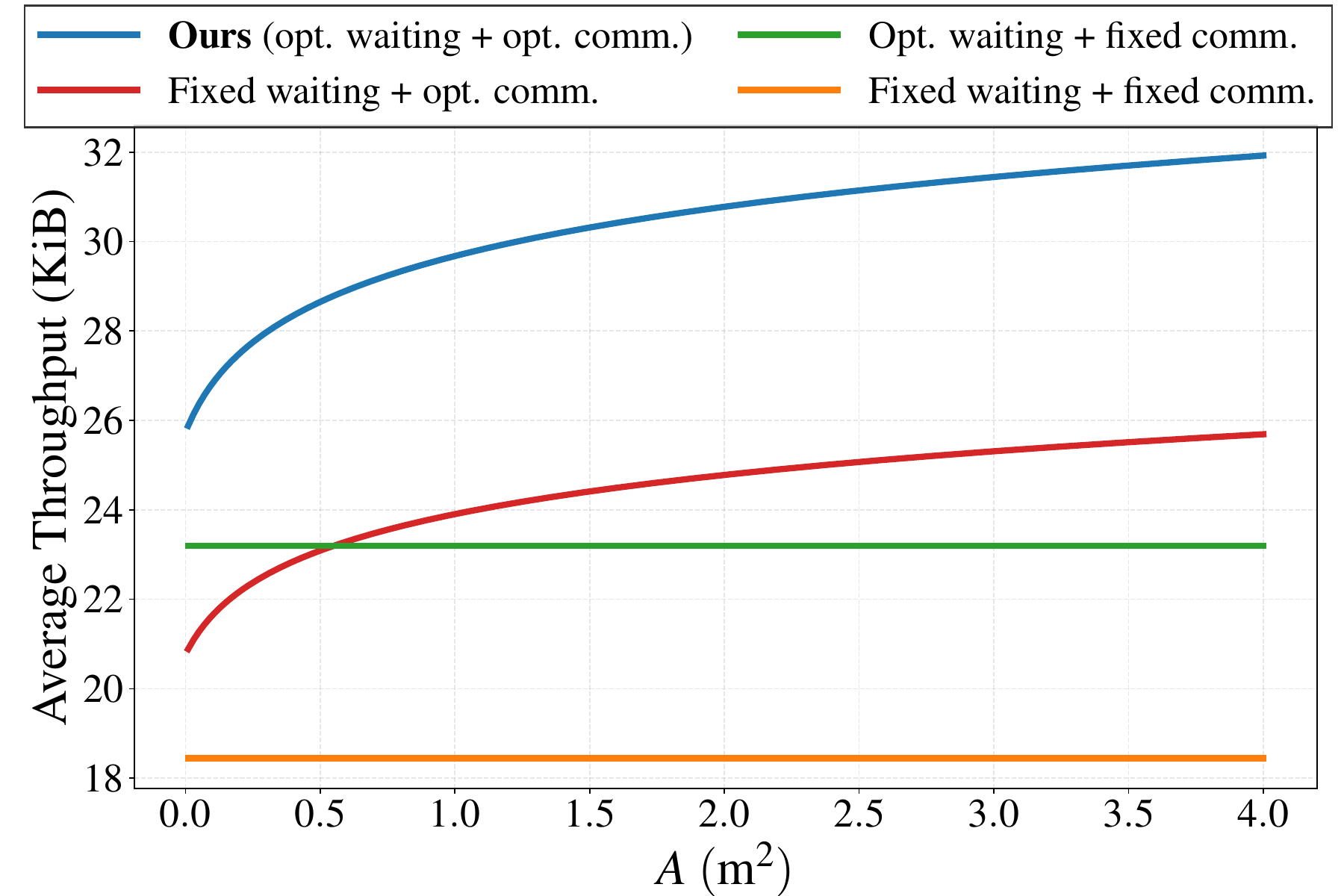}\label{throughput A}}
\subfigure[Average communication throughput versus $\beta$.]{\includegraphics[width=0.32\textwidth]{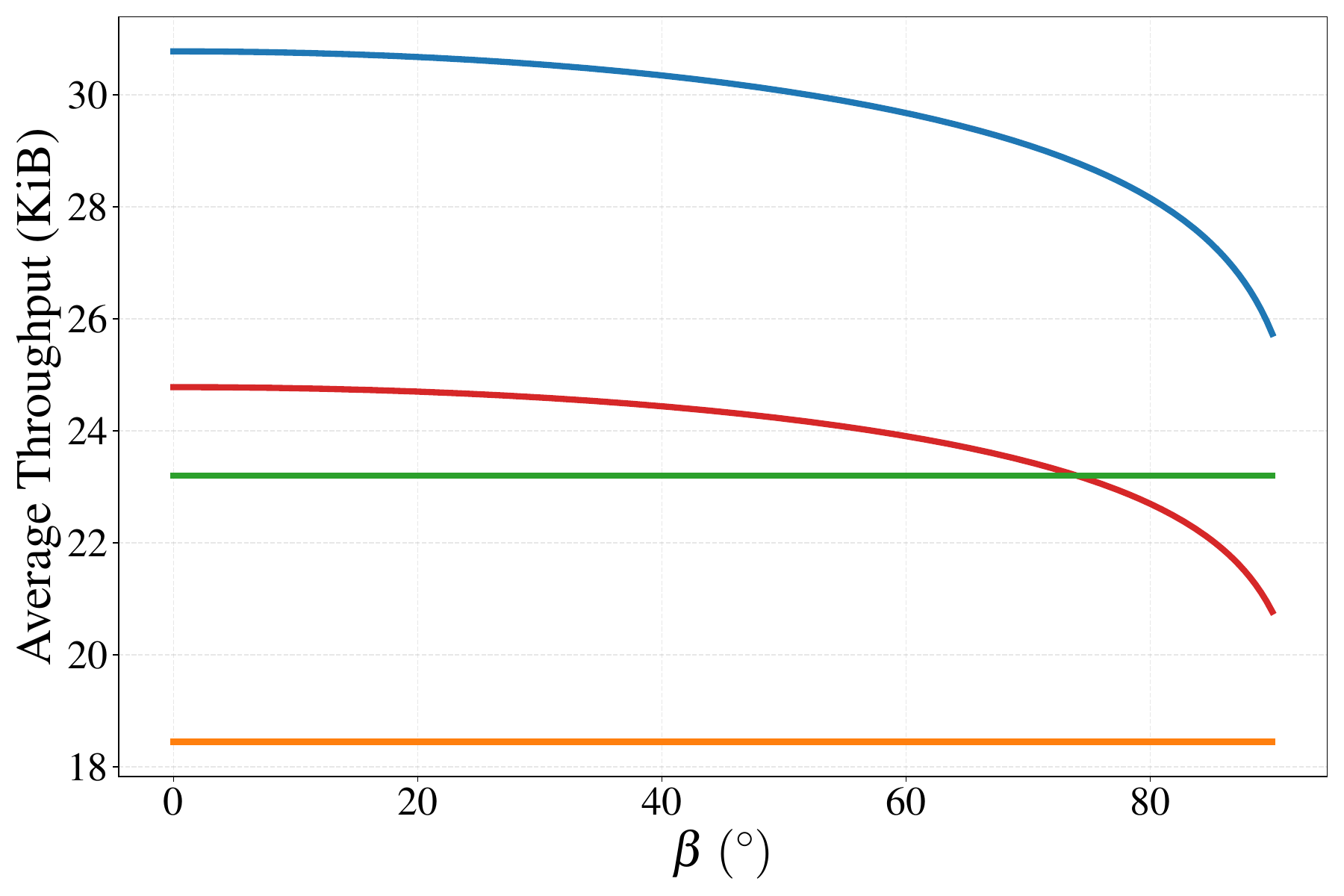}\label{throughput beta}}
\caption{Communication throughput across different schemes.}\label{fig:throughput}
\vspace{-10pt}
\end{figure}

To isolate the gains derived from the optimal waiting time and transmit power, in this subsection, we fix the DDIM step to $n=5$ and investigate the communication throughput. The following three baselines are considered: (1) \emph{Fixed waiting time and optimal transmit power.} $t_1 = 0.4$~s, with the optimal power obtained from Theorem~\ref{theorem 1}. (2) \emph{Optimal waiting time and fixed transmit power.} $t_1^*$ is obtained from Proposition~\ref{prop t1}, with $P_{\mathrm{comm}}=10$ W. (3) \emph{Fixed waiting time and fixed transmit power.} Both the waiting time and transmit power are fixed: $t_1 = 0.4$~s and $P_{\text{comm}} = 10$~W.

As illustrated in Fig.~\ref{fig:throughput}, our proposed strategy consistently achieves the highest communication throughput across the entire range of panel areas $A$ and solar elevation angles $\beta$. In Fig.~\ref{throughput A}, the throughput grows logarithmically with $A$, demonstrating that our joint optimization of $t_1^*$ and $P_{\text{comm}}^*(t)$ effectively converts increased harvested energy into higher data volume. In Fig.~\ref{throughput beta}, we observe that throughput remains stable at low elevation angles but degrades as $\beta$ approaches $90^\circ$, reflecting the reduction in the effective solar harvesting area as the sun vector becomes orthogonal to the orbital plane. The persistent performance gap between our scheme and the baselines demonstrates the necessity of minimizing the waiting phase and adopting the LCE principle.

\subsection{Evaluation of the Closed-form Solution for (P4)}

\begin{figure}[t]
\centering
\subfigure[$n^*$ versus $\lambda$.]{\includegraphics[width=0.24\textwidth]{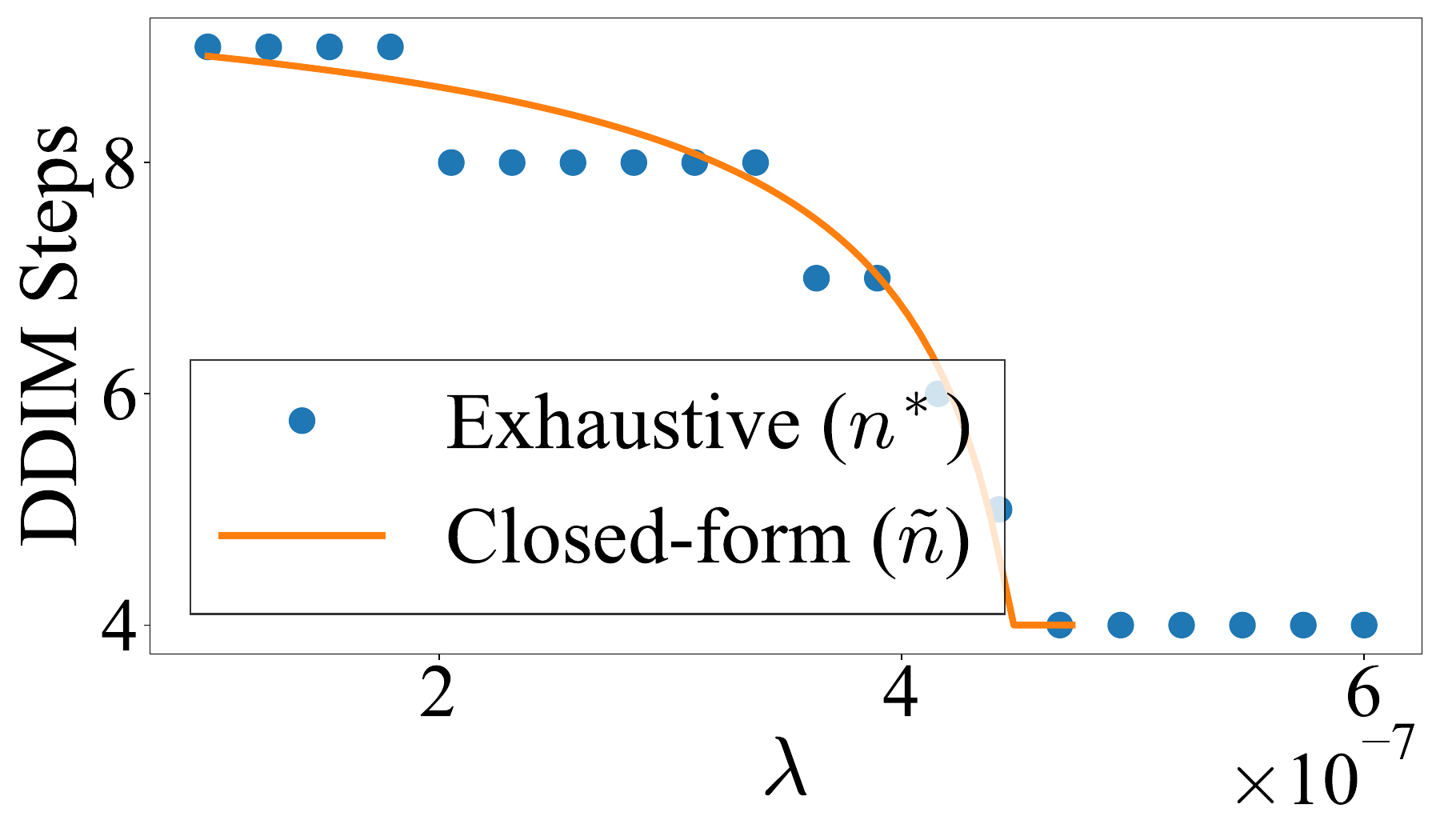}\label{n lam}} 
\subfigure[$n^*$ versus $E_0$.]{\includegraphics[width=0.24\textwidth]{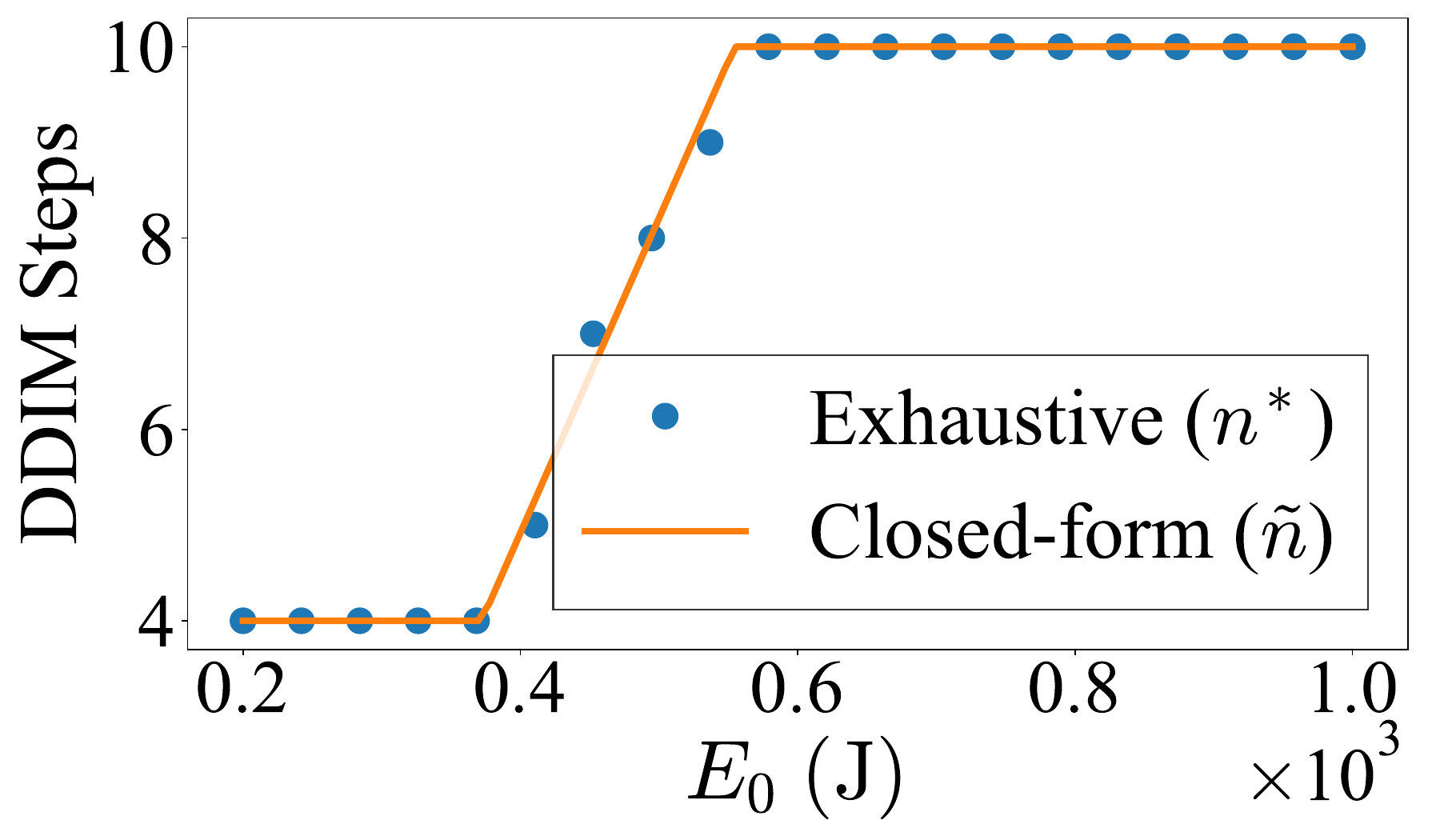}\label{n E0}}  
\subfigure[$n^*$ versus $B$.]{\includegraphics[width=0.24\textwidth]{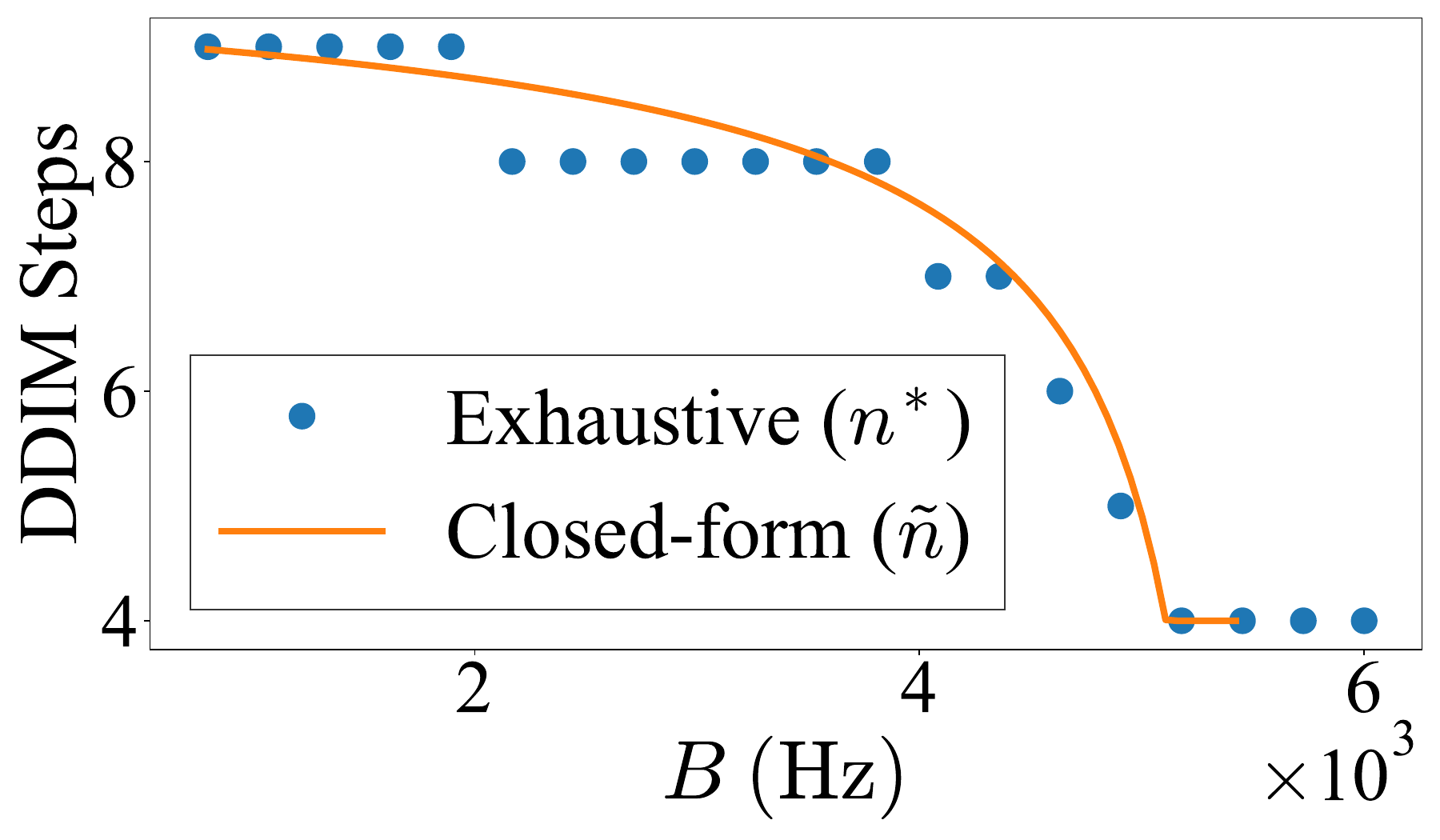}\label{n B}}  
\subfigure[$n^*$ versus $T$.]{\includegraphics[width=0.24\textwidth]{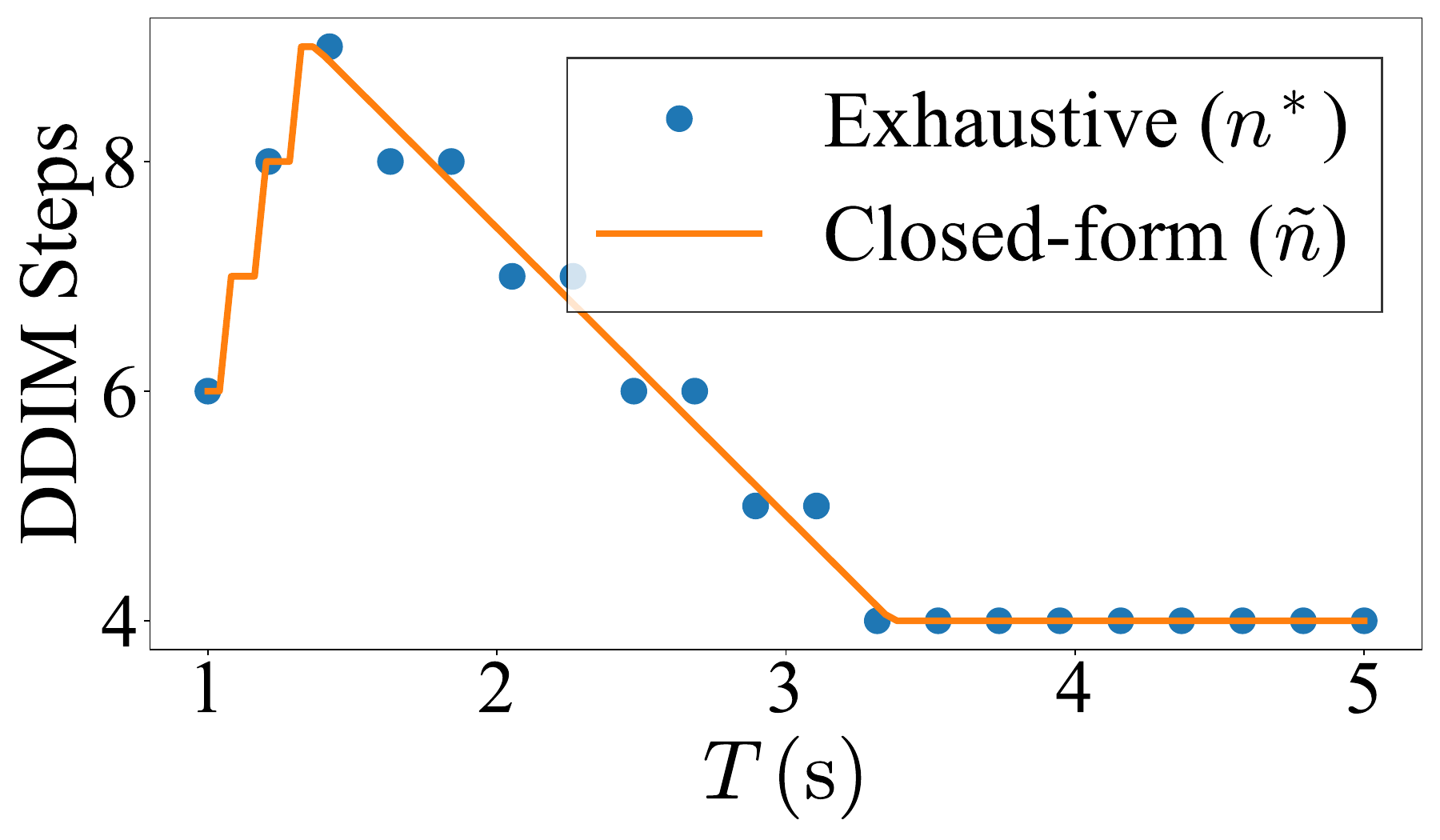}\label{n T}}  
\caption{Comparison of the optimal DDIM steps obtained by exhaustive search and the pre-rounding closed-form solution under different system parameters.}\label{comparison ex cls}
  \end{figure}

As shown in Fig.~\ref{comparison ex cls}, the pre-rounding closed-form solution closely tracks the exhaustive-search optimum across the considered settings.
This demonstrates that the closed-form rule maintains solution accuracy while reducing complexity. Analyzing the communication-dominant scenario reveals key resource trade-offs. First, $n^*$ decreases as the throughput weight $\lambda$ increases (Fig.~\ref{n lam}), shifting resources toward transmission. A similar trend is observed for the bandwidth \(B\) (Fig.~\ref{n B}). Conversely, higher initial energy $E_0$ allows for more computation, increasing $n^*$ (Fig.~\ref{n E0}). 
Interestingly, $n^*$ exhibits a non-monotonic relationship with the deadline $T$ (Fig.~\ref{n T}). Under tight deadlines ($T < 1.5$~s), transmission is severely bottlenecked, so the system prioritizes computation to preserve generation quality. As $T$ relaxes further, $n^*$ steadily decreases to fully exploit the extended window for throughput. 

\subsection{Performance Evaluation}

In this subsection, we evaluate the holistic performance of our joint C$^2$ optimization scheme. 

\subsubsection{Comparison with Other Schemes}

\begin{figure}[t]
\centering
\subfigure[Average E2E CLIP score versus $P_0$.]{\includegraphics[width=0.34\textwidth]{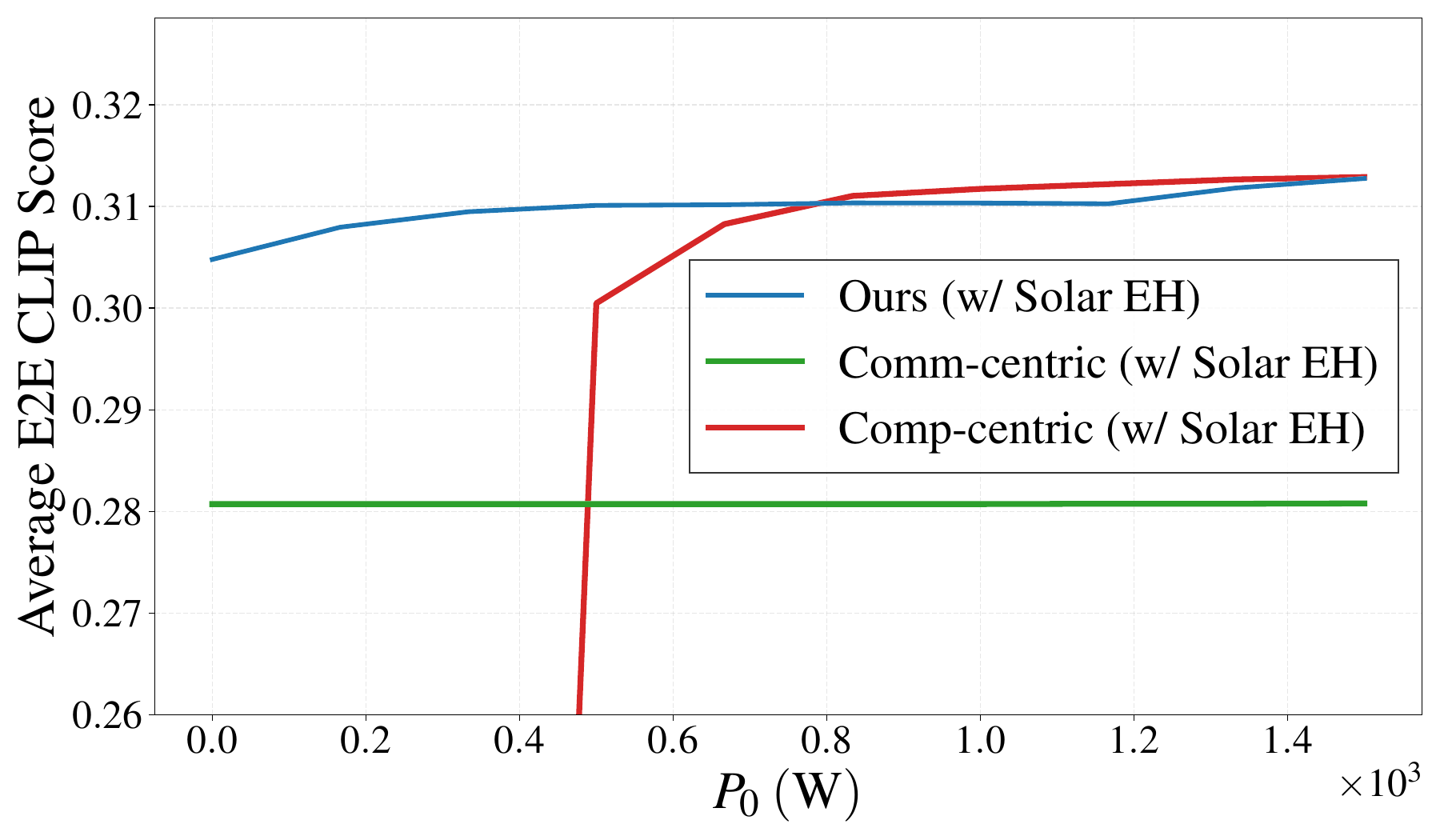}\label{CLIP_P0_w/_EH}}
\subfigure[Average E2E CLIP score versus $E_0$.]{\includegraphics[width=0.34\textwidth]{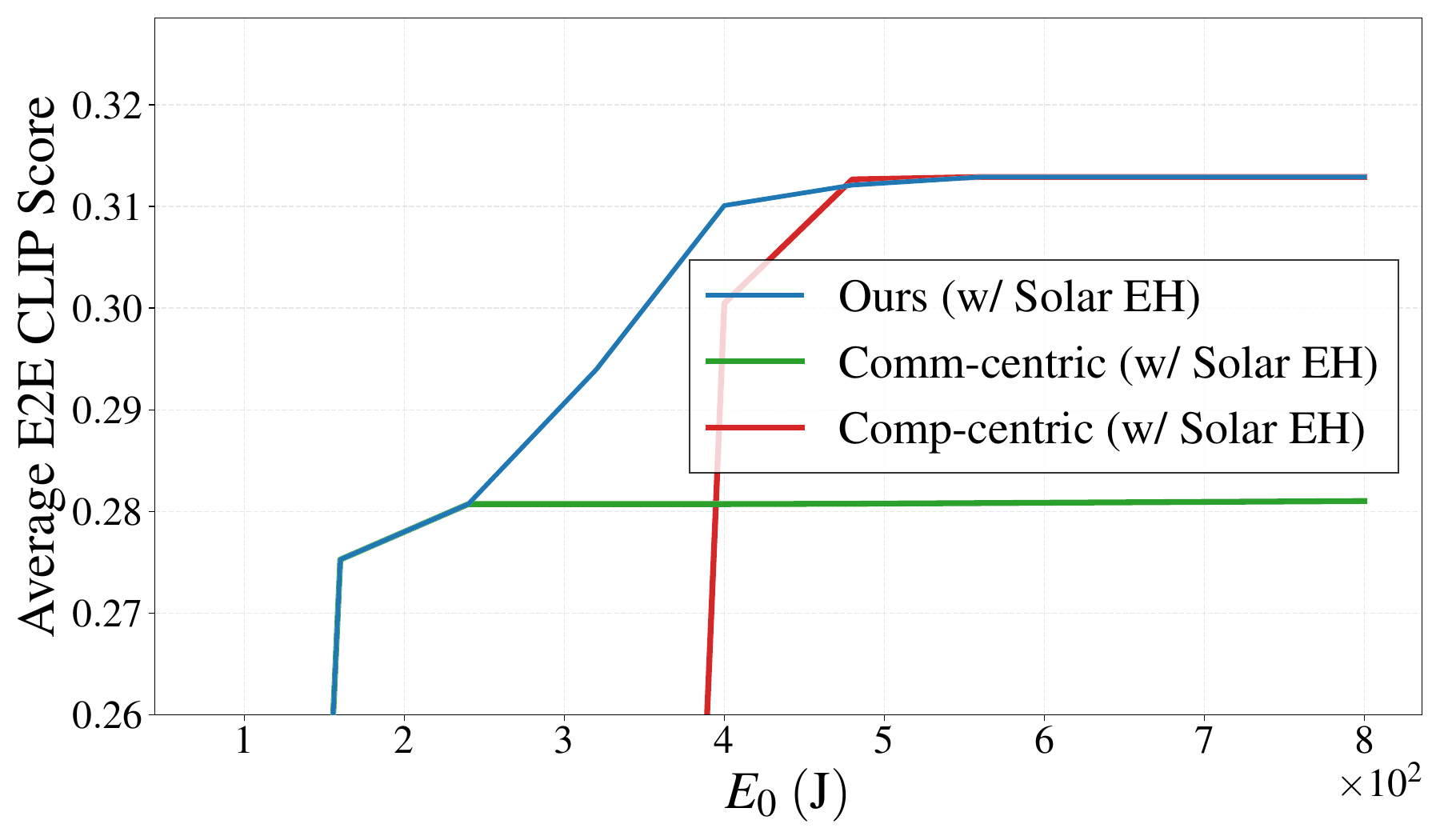}\label{CLIP_E0_w/_EH}}
\caption{Performance comparison of various schemes under solar EH.}\label{E2E_CLIP_Score_System_Params}
\end{figure}

As shown in Fig. 7, the proposed joint C$^2$ policy achieves the highest E2E CLIP score across the considered energy range. In the low-energy regime, it reduces the DDIM step and approaches the communication-centric baseline, preserving sufficient energy for transmission. In the high-energy regime, it allocates more resources to generation and approaches the computation-centric baseline. The largest gain appears in the moderate-energy regime, where neither extreme allocation is optimal. The proposed policy effectively balances onboard generation quality and downlink throughput. 
The sharp transition of each scheme occurs when the available energy crosses its scheme-dependent feasibility threshold for completing both generation and transmission. Below the corresponding threshold, insufficient energy remains for downlink transmission after generation, resulting in transmission failure.

\subsubsection{Impact of Orbital Position}

\begin{figure}[t]
\centering
\subfigure[Average communication throughput.]{\includegraphics[width=0.32\textwidth]{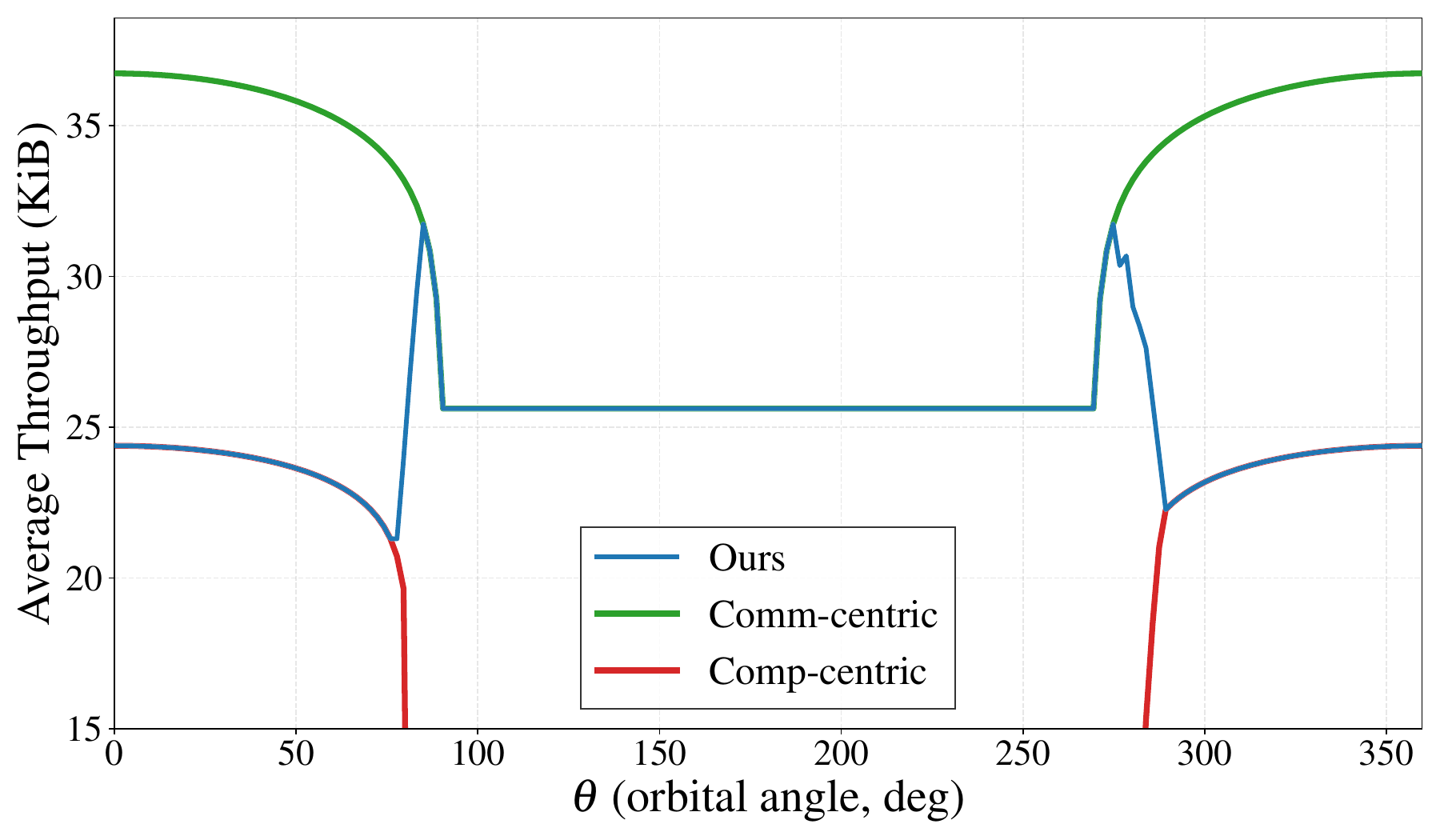}\label{fig:throughput_theta}}
\subfigure[Average E2E CLIP score.]{\includegraphics[width=0.32\textwidth]{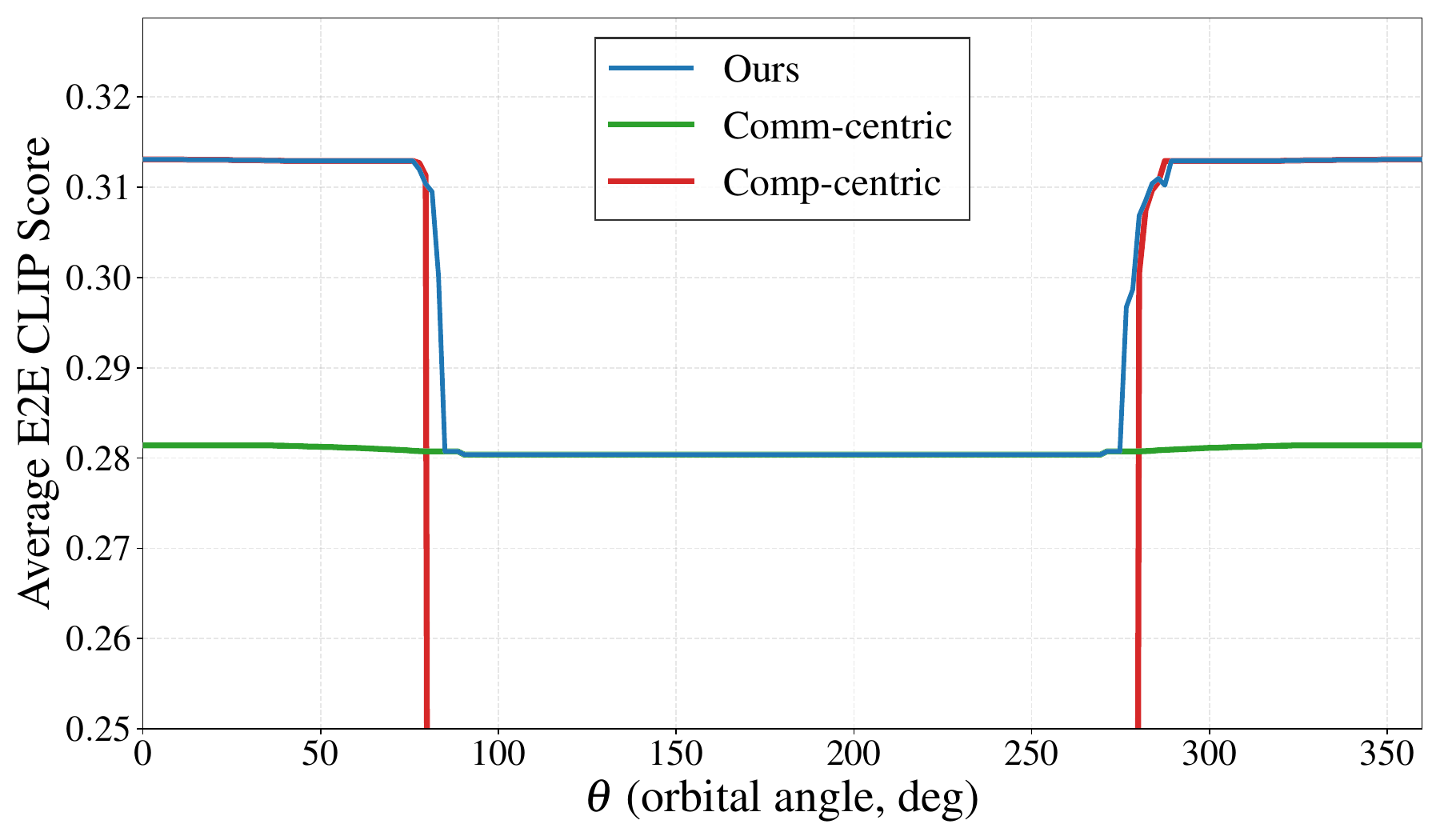}\label{fig:clip_theta}}
\caption{System performance at different orbital angles ($\theta \in [0^\circ, 360^\circ]$).}\label{fig:orbital_dynamics}
\end{figure}

Fig.~\ref{fig:orbital_dynamics} illustrates the communication throughput and E2E CLIP score for independent task executions initiated at different orbital angles, each with initial energy $E_0$ and the corresponding solar-EH profile.
Although the communication-centric baseline achieves the highest throughput by minimizing the computation load, its E2E CLIP score remains lower than the other schemes in sunlit regions because the onboard generation quality is limited by the minimum DDIM step.
In contrast, the computation-centric baseline achieves a high E2E CLIP score in sunlit regions, but suffers a sharp performance drop during the eclipse. This is because the fixed high DDIM step consumes excessive computation energy when no solar energy is harvested, leaving insufficient energy for downlink transmission. As a result, transmission fails and the E2E CLIP score drops to zero.
The proposed joint \(C^2\) policy avoids both drawbacks by adapting the DDIM step to the orbital energy state: it allocates more resources to generation when sunlight is available and scales back computation during the eclipse to preserve downlink transmission. Consequently, it maintains the highest E2E CLIP score over the full orbit.

\subsubsection{Impact of Solar EH}

\begin{figure}[t]
\centering
\includegraphics[width=0.32\textwidth]{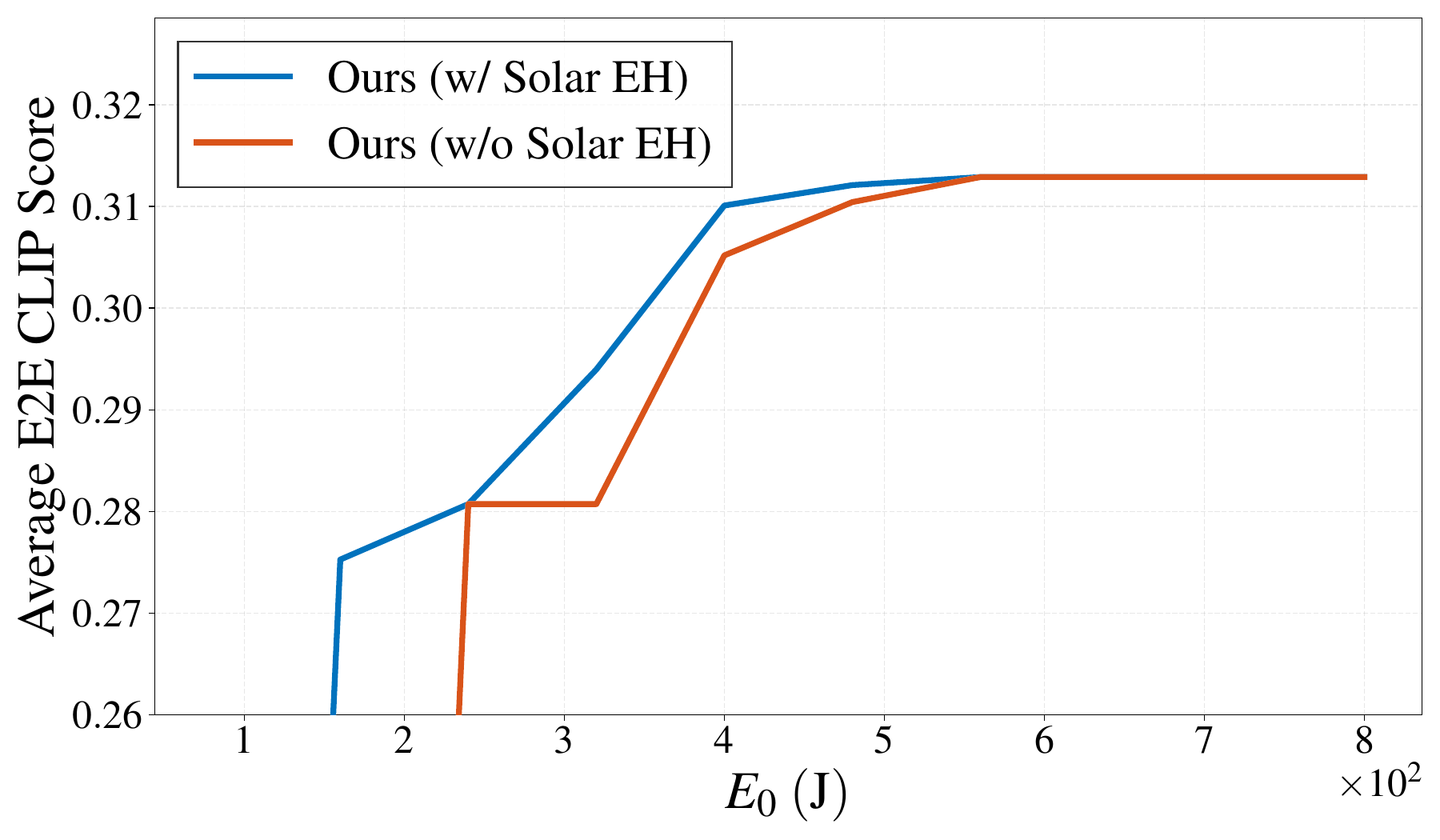}
\caption{Impact of solar EH on the average E2E CLIP score.}\label{fig:clip_eh_compare}
\end{figure}

\begin{figure}[t]
\centering
\includegraphics[width=0.32\textwidth]{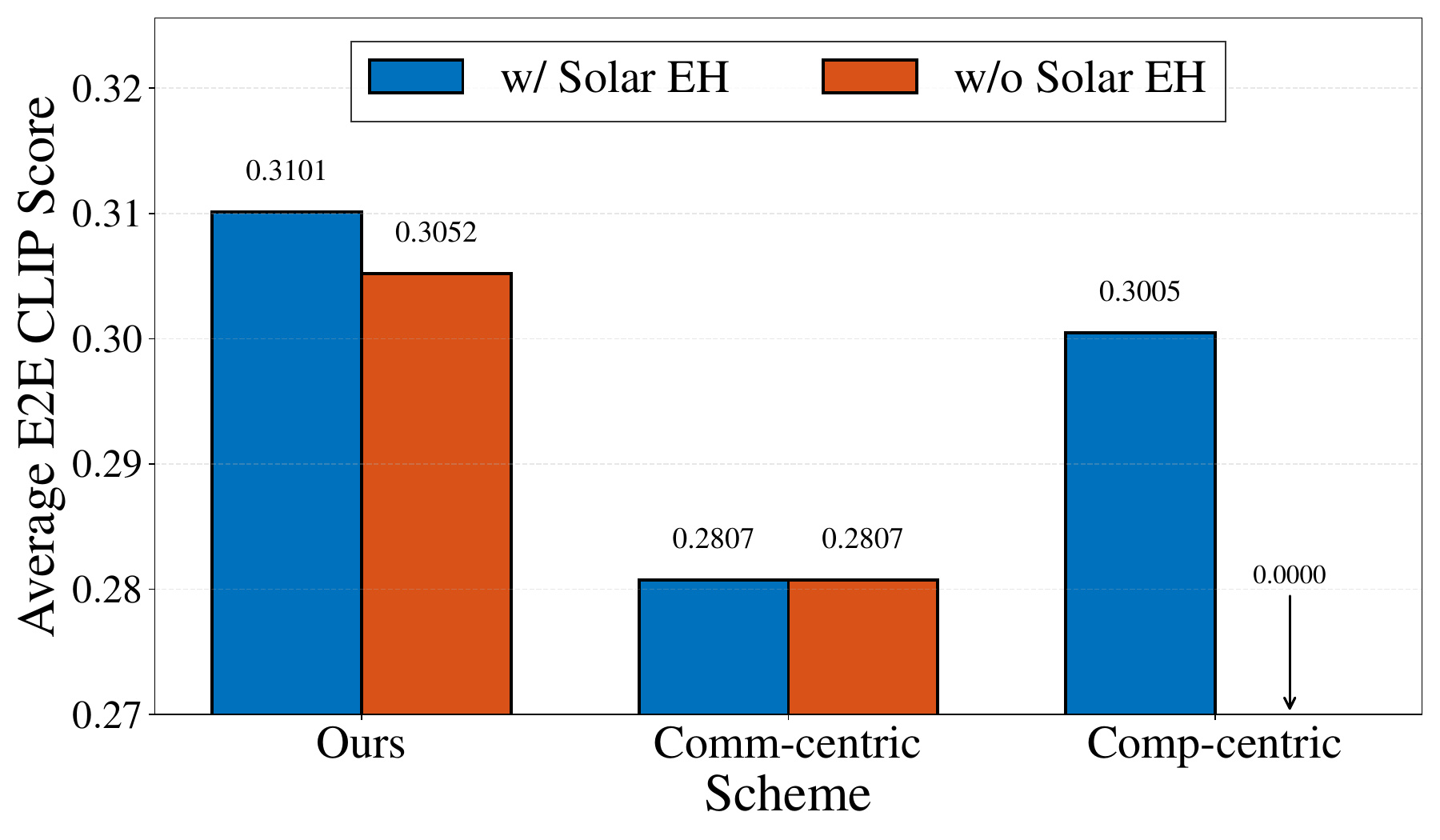}
\caption{Comparison of the average E2E CLIP score of different schemes in a specific scenario, evaluated with and without solar EH.}\label{fig:clip_bar}
\end{figure}

\begin{figure}[t]
    \centering
    \subfigure[Visual performance with solar EH.]{        \includegraphics[width=0.42\textwidth]{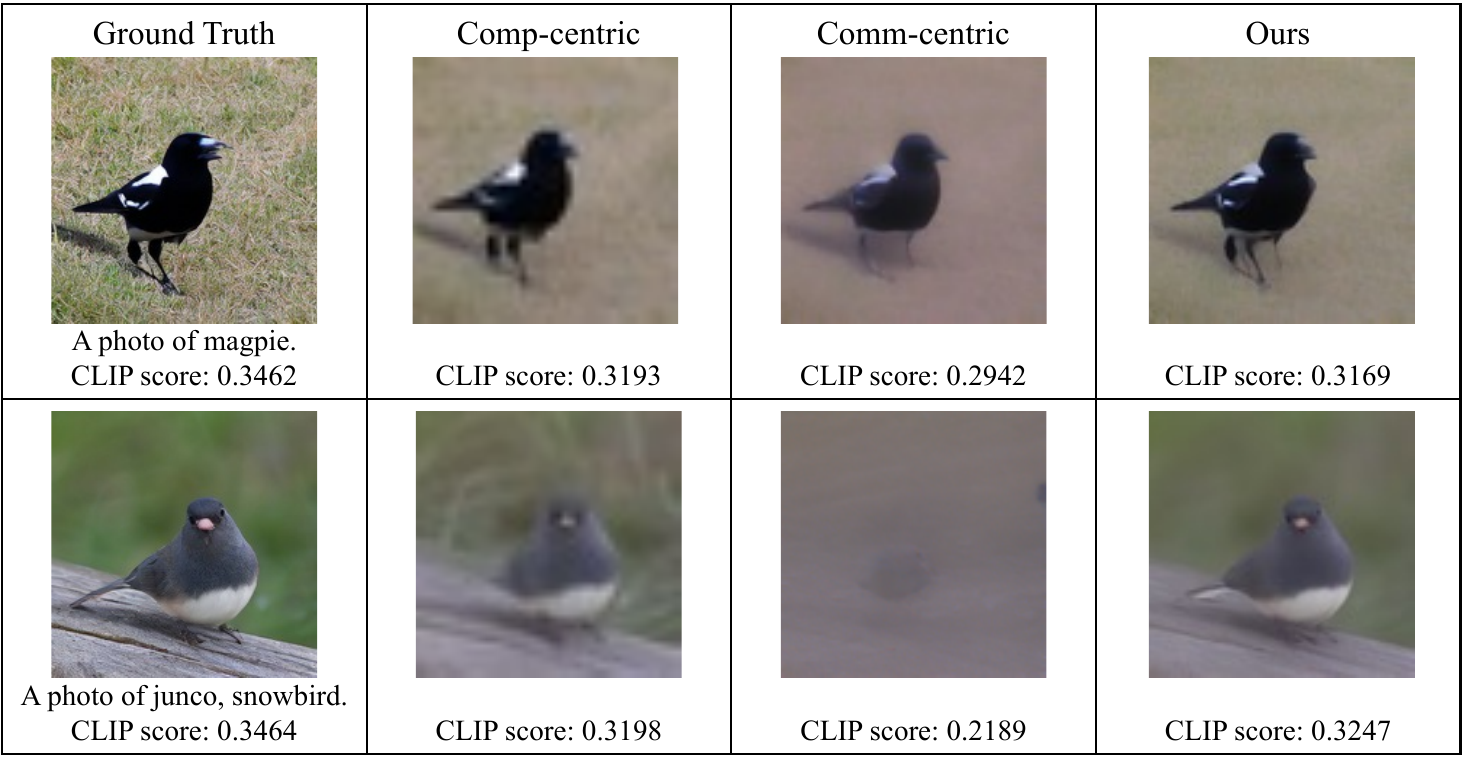}
        \label{fig:sample_with_eh}
    }
    \subfigure[Visual performance without solar EH.]{        \includegraphics[width=0.42\textwidth]{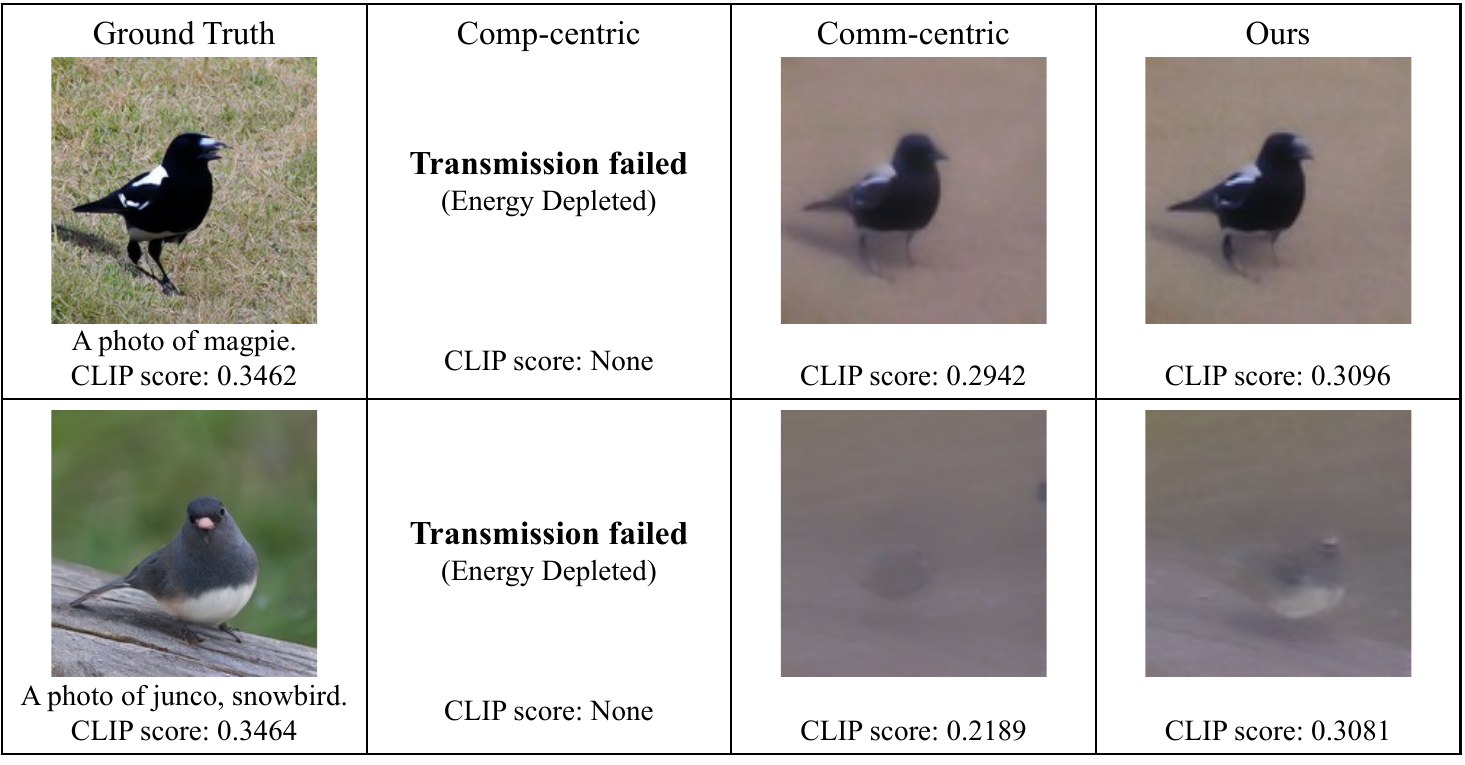}
        \label{fig:sample_without_eh}
    }

    \caption{Qualitative comparison of generated images.}
    \label{fig:visual_samples}
    \vspace{-5pt}
\end{figure}

Fig.~\ref{fig:clip_eh_compare} shows the impact of solar EH under different \(E_0\). In the resource-constrained regime, solar EH substantially improves the E2E CLIP score because the harvested energy helps satisfy the minimum generation-and-transmission requirement and can further support a larger DDIM step or higher downlink throughput. The visible step-like transitions are caused by discrete DDIM step selection: once the available energy exceeds a threshold that supports an additional DDIM step while preserving reliable transmission, \(n\) increases, resulting in a sudden improvement in the E2E CLIP score. As \(E_0\) becomes sufficiently large, the curves with and without solar EH converge because the initial energy alone is enough to support both generation and transmission, making additional harvested energy less influential.
Fig.~\ref{fig:clip_bar} further compares different schemes with and without solar EH in a representative scenario. With solar EH, the proposed scheme achieves the highest E2E CLIP score by jointly adapting computation and communication. Without solar EH, the computation-centric baseline becomes transmission-infeasible because its fixed high DDIM step depletes the energy budget before downlink transmission, causing transmission failure. The communication-centric baseline remains stable but quality-limited because it always prioritizes transmission and uses the minimum DDIM step. In contrast, the proposed scheme scales back computation when solar EH is unavailable, preserving transmission reliability while maintaining a competitive E2E CLIP score.
The qualitative examples in Fig.~\ref{fig:visual_samples} are consistent with these results: solar EH improves visual quality, and without solar EH, the proposed scheme still delivers semantically aligned images while the computation-centric baseline fails to transmit.

\section{Concluding Remarks}
\label{Concluding Remarks}

This work establishes a framework for space generative AI and reveals the inherent coupled \(C^2\) bottleneck under solar EH. Specifically, it characterizes how a satellite should jointly manage waiting, diffusion-based generation, and downlink transmission when \(C^2\) stages draw from the same time-causal harvested-energy buffer. First, computation should start at the earliest energy-feasible time. Second, constant transmit power is optimal in most orbital states, while dawn may require solar-tracking control. Third, adaptive generation depth is most beneficial under moderate energy, where neither computation- nor communication-centric extremes are optimal. These results advance satellite edge intelligence from generic computation-offloading design towards task-oriented generative service provisioning under physical orbital constraints.

Future work should extend this framework to multi-user and multi-task satellite systems, where heterogeneous generative requests compete for shared harvested energy. It is also important to incorporate practical hardware constraints such as finite battery capacity, peak transmit power, and accelerator-dependent inference energy. Finally, more refined E2E quality models are needed to jointly capture generation quality, compression distortion, channel effects, and human-perceived semantic fidelity.

\section*{Appendix}
\appendices
\renewcommand{\thesubsection}{\Alph{subsection}}

\subsection{Derivation for Eq.~\eqref{output power}}\label{derivation of P(t)}

As the satellite orbits the Earth with angular velocity $\omega$, the \(\mathcal{F}_b\) rotates relative to the \(\mathcal{F}_o\)  around the \(Y_o\). The rotation matrix from $\mathcal{F}_o$ to $\mathcal{F}_b$ at time $t$ is:
\begin{equation}
    \mathbf{R}(t) = 
    \begin{bmatrix}
    -\sin(\omega t) & 0 & -\cos(\omega t) \\
    0 & -1 & 0 \\
    -\cos(\omega t) & 0 & \sin(\omega t)
    \end{bmatrix}.
\end{equation}
The sun vector in \(\mathcal{F}_b\) is derived as $\mathbf{S}_b(t) = \mathbf{R}(t) \mathbf{S}_o$:
\begin{equation}
    \mathbf{S}_b(t) = 
    \begin{bmatrix}
    -\cos\beta \sin(\omega t) \\
    -\sin\beta \\
    -\cos\beta \cos(\omega t)
    \end{bmatrix}.
\end{equation}

The instantaneous harvested power is proportional to the effective projection area:
\begin{equation}
    P(t) = \eta \cdot \gamma \cdot A \cdot \max\left( \mathbf{n} \cdot \mathbf{S}_b(t), 0 \right).
\end{equation}
Taking the inner product yields the results.

\subsection{Proof of Proposition \ref{prop R(n)}}
\label{proof prop 2}

For $R(n)$, with $u\triangleq T-t_1-c_1n-c_2$ and $\Delta_E \triangleq  E_b - \Pcomp (T-t_1)$, it becomes
\begin{align}
f(u)
&= B\, u \, \log_2\!\Bigl( 1 + \alpha \Pcomp + \frac{\alpha \Delta_E}{u} \Bigr), \quad u>0.\label{f(u)}
\end{align}
Calculating its first derivative yields

\begin{equation} \label{f'(u)}
\begin{split}
    f'(u)
    &= B \Bigg[
    \log_2\!\Bigl( 1 + \alpha \Pcomp + \tfrac{\alpha \Delta_E}{u} \Bigr)
    \autobreak[-10pt]{}+
    \frac{u}{\ln 2}
    \underbrace{
        \frac{d}{du}\ln\!\Bigl( 1 + \alpha \Pcomp + \tfrac{\alpha \Delta_E}{u} \Bigr)
    }_{\textstyle =\, \frac{-\alpha \Delta_E/u^2}{1 + \alpha \Pcomp + \alpha \Delta_E/u}}
    \Bigg] \\[-5pt] 
    &= B \Bigg[
    \log_2\!\Bigl( 1 + \alpha \Pcomp + \tfrac{\alpha \Delta_E}{u} \Bigr)
    \;\autobreak[-7pt]{}-\;
    \frac{1}{\ln 2}\,
    \frac{\alpha \Delta_E}{u(1 + \alpha \Pcomp) + \alpha \Delta_E}
    \Bigg].
\end{split}
\end{equation}
The second derivative is
\begin{equation}
\begin{split}
    f''(u)
    &= B \frac{d}{du}\Bigg\{
    \log_2\!\Bigl( 1 + \alpha \Pcomp + \tfrac{\alpha \Delta_E}{u} \Bigr)
    \;\autobreak[-10pt]{}-\;
    \frac{1}{\ln 2}\,
    \frac{\alpha \Delta_E}{\,u\bigl(1 + \alpha \Pcomp + \alpha \Delta_E/u\bigr)}
    \Bigg\} \\[-5pt]
    &= \frac{B}{\ln 2} \Bigg[
    \frac{-\,\alpha \Delta_E/u^2}{\,1 + \alpha \Pcomp + \alpha \Delta_E/u\,}
    \;\autobreak[-5pt]{}-\;
    \frac{d}{du}
    \left( \frac{\alpha \Delta_E}{\,u\big(1 + \alpha \Pcomp + \alpha \Delta_E/u\big)} \right)
    \Bigg].
\end{split} 
\end{equation}
Compute the remaining derivative. Let
  $\psi(u) \triangleq  1 + \alpha \Pcomp + \alpha \Delta_E/u$ and
  $q(u) \triangleq  \frac{\alpha \Delta_E}{u \psi(u)}$.
Then $\psi'(u) = - \alpha \Delta_E/u^2$, and
\begin{equation}
    \begin{split}
q'(u)
&= \alpha \Delta_E \cdot \frac{d}{du}\bigl(u^{-1} \psi(u)^{-1}\bigr) \\[-2pt]
&= \alpha \Delta_E \left[ -u^{-2} \psi^{-1} + u^{-1}\cdot \bigl(-1\bigr) \psi^{-2} \psi'(u) \right] \\[-2pt]
&= -\,\frac{\alpha \Delta_E}{u^{2} \psi(u)} \;+\; \frac{\alpha \Delta_E}{u}\cdot \frac{\alpha \Delta_E/u^{2}}{\psi(u)^{2}} \\[-2pt]
&= -\,\frac{\alpha \Delta_E}{u^{2} \psi(u)} \;+\; \frac{\alpha^{2}\Delta_E^{2}}{u^{3} \psi(u)^{2}}.
\end{split}
\end{equation}
Plugging back, we have
\begin{equation}
   \begin{split}
f''(u)
&= \frac{B}{\ln 2}
\left[
-\frac{\alpha \Delta_E}{u^{2} \psi(u)} - \Bigl(-\frac{\alpha \Delta_E}{u^{2} \psi(u)} + \frac{\alpha^{2}\Delta_E^{2}}{u^{3} \psi(u)^{2}}\Bigr)
\right] \\
&= \frac{B}{\ln 2}
\left[
-\,\frac{\alpha^{2}\Delta_E^{2}}{u^{3} \psi(u)^{2}}
\right] \\
&= -\frac{B\alpha^{2}\Delta_E^{2}}{\ln 2}\frac{1}{u^{3}\, \bigl(1 + \alpha \Pcomp + \alpha \Delta_E/u\bigr)^{2}}\leq0.
\end{split} 
\end{equation}
Therefore, $f'(u)\ge f'(+\infty)>0$.
Hence, we have $R''(n) \leq 0$ and $R'(n) < 0$, which completes the proof.

\subsection{Proof of Theorem \ref{thm:optimal_n}}\label{app:proof_thm2}

Let $u \triangleq T - t_1 - c_1 n - c_2$, and conversely, \(n=a_1-a_2u\) where $a_1 \triangleq \frac{T - t_1 - c_2}{c_1}$, $a_2 \triangleq \frac{1}{c_1}$. This transforms $U(n)$ to
{\predisplaypenalty=0
\begin{equation}
    \hat{U}(u) = - a_2 c_3 u + a_1 c_3 + c_4 + \lambda f(u),
\end{equation}}
where $f(u)$ is from \eqref{f(u)}. 
Consequently, it suffices to solve $\hat U'(u)=0$ for the
stationary point $\tilde u$, from which $\tilde n$ is recovered.
Using the expression for $f'(u)$ from \eqref{f'(u)}, the derivative $\hat{U}'(u)$ is explicitly given by
\begin{equation} \label{utility function w.r.t. u}
\begin{split}
    \hat{U}'(u) = &- a_2 c_3 + \lambda B \Bigg[
    \log_2 \big( 1 + \alpha \Pcomp + \frac{\alpha \Delta_E}{u} \big)
    \autobreak[-1.5ex]-
    \frac{1}{\ln 2}
    \frac{\alpha \Delta_E}{u(1 + \alpha \Pcomp + \alpha \Delta_E/u)}
    \Bigg].
\end{split}
\end{equation}
Setting \(\hat U'(u)=0\) gives
\begin{equation}
    \ln\!\Bigl( 1 + \alpha \Pcomp + \tfrac{\alpha \Delta_E}{u} \Bigr)
-
\frac{\alpha \Delta_E}{\,u\bigl(1 + \alpha \Pcomp + \alpha \Delta_E/u\bigr)}
=
\frac{a_2c_3}{\lambda B}\ln 2.\label{simplified equ}
\end{equation}
Let $D \triangleq1 + \alpha\,\Pcomp$, $K \triangleq \frac{a_2c_3}{\lambda B}\ln 2$, and $C \triangleq 1 + K$. 
Then
\[\ln\!\Bigl(D + \tfrac{\alpha \Delta_E}{u} \Bigr)
\;-\;
\frac{\alpha \Delta_E}{\,u\bigl(D + \alpha \Delta_E/u\bigr)}
\;=\;
K.
\]
Let
\(x \;\triangleq\; D + \frac{\alpha \Delta_E}{u}\). Thus
\(\frac{\alpha \Delta_E}{u} = x - D\) and \(u = \frac{\alpha \Delta_E}{x-D},
\;
x \neq D\). This further simplifies the equation as
\[
\ln x \;-\; \frac{\alpha \Delta_E}{u\,x} \;=\; K.
\]
Plugging \(\frac{\alpha \Delta_E}{u\,x} \;=\; \frac{x-D}{x}\) into it, we have
\[\ln x \;+\; \frac{D}{x} \;=\; 1 + K\;=\;C.\]
Transform it and we have \(x = e^{\,C - D/x} = e^{C}\,e^{-D/x}\). Then, \(
x\,e^{D/x} = e^{C}.\)
Let $y \triangleq\frac{D}{x}$, $x = \frac{D}{y}$, then \(\frac{D}{y}\,e^{y} = e^{C}\) and thus \((-y)\,e^{-y} = -D\,e^{-C}\). Hence \(-y = W_k\!\bigl(-D e^{-C}\bigr),\) and \(x = \frac{D}{y} = -{D}/{W_k\!\bigl(-D e^{-C}\bigr)},\)
with branch index \(k\) chosen to satisfy domain constraints.
For brevity, we denote \(\zeta_k \equiv W_k\!\bigl(-D e^{-C}\bigr)\), and then
$
u \;=\; \frac{\alpha \Delta_E}{-\frac{D}{\zeta_k}-D}
\;=\; -\,\frac{\alpha \Delta_E}{D}\,\frac{\zeta_k}{\zeta_k + 1}.
$
We further investigate the term $y = \frac{D}{x} = \frac{1}{1+\frac{\alpha \Delta_E}{Du}}$. All constituent parameters are positive, with the exception of $\Delta_E $. If $\Delta_E > 0$, it follows that $y < 1$ (implying $-y > -1$); consequently, the solution corresponds to the principal branch $W_0$. If $\Delta_E < 0$, the branch $W_{-1}$ is selected.
Therefore,
\begin{equation}
    \tilde{u} = 
    \begin{cases}
        -\frac{\alpha \Delta_E}{D}\frac{ W_0\bigl(-D e^{-C}\bigr) }{ W_0\bigl(-D e^{-C}\bigr) + 1 }, & \text{if } \Delta_E > 0, \\[1pt]
        -\frac{\alpha \Delta_E}{D}\frac{ W_{-1}\bigl(-D e^{-C}\bigr) }{ W_{-1}\bigl(-D e^{-C}\bigr) + 1 }, & \text{if } \Delta_E < 0.
    \end{cases}
    \label{tilde u}
\end{equation}
Substituting \eqref{tilde u} into \(\tilde{n}
    = a_1 - a_2 \tilde{u}\) yields the results in \eqref{eq:optimal_n_cont}.
Due to the concavity of the objective function, the optimal integer
solution $n^*$ is guaranteed to lie in the candidate set $\mathcal S$
defined in Theorem~\ref{thm:optimal_n}. This completes the proof.

\bibliographystyle{ieeetr}
\bibliography{EHAI/Ref}

@article{cai20253c,
  title={3C Resources Joint Allocation for Time-Deterministic Remote Sensing Image Backhaul in the Space-Ground Integrated Network},
  author={Cai, Chongxiao and Zhu, Yan and Sheng, Min and Li, Jiandong and Shi, Yan and Zhou, Di and Xie, Ziwen and Zhang, Chen},
  journal={arXiv preprint arXiv:2510.09409},
  year={2025}
}

@article{shannon1948mathematical,
  title={A mathematical theory of communication},
  author={Shannon, Claude E},
  journal={The Bell system technical journal},
  volume={27},
  number={3},
  pages={379--423},
  year={1948},
  publisher={Nokia Bell Labs}
}

@article{chen2024survey,
  title={A survey on resource management in joint communication and computing-embedded SAGIN},
  author={Chen, Qian and Guo, Zheng and Meng, Weixiao and Han, Shuai and Li, Cheng and Quek, Tony QS},
  journal={IEEE Commun. Surveys Tuts.},
  volume={27},
  number={3},
  pages={1911--1954},
  year={2024},
  publisher={IEEE}
}

@article{ji2024dynamic,
  title={Dynamic space-ground integrated mobility management strategy for mega LEO satellite constellations},
  author={Ji, Sijing and Zhou, Di and Sheng, Min and Li, Jiandong and Han, Zhu},
  journal={IEEE Trans. Wireless Commun.},
  volume={23},
  number={9},
  pages={11043--11060},
  year={2024},
  publisher={IEEE}
}

@ARTICLE{wen2026ISCC,
  author={Wen, Dingzhu and Xie, Sijing and Cao, Xiaowen and Cui, Yuanhao and Xu, Jie and Shi, Yuanming and Cui, Shuguang},
  journal={IEEE Trans. Wireless Commun.}, 
  title={Integrated Sensing, Communication, and Computation for Over-the-Air Federated Edge Learning}, 
  year={2026},
  volume={25},
  number={},
  pages={2748-2762},
  doi={10.1109/TWC.2025.3598997}}

@inproceedings{liu2024orbit,
  title={In-Orbit Processing or Not? Sunlight-Aware Task Scheduling for Energy-Efficient Space Edge Computing Networks},
  author={Liu, Weisen and Lai, Zeqi and Wu, Qian and Li, Hewu and Zhang, Qi and Li, Zonglun and Li, Yuanjie and Liu, Jun},
  booktitle={IEEE INFOCOM 2024-IEEE Conference on Computer Communications},
  pages={881--890},
  year={2024},
  organization={IEEE}
}

@misc{starcloud_2025_h100,
  author       = {{Starcloud}},
  title        = {Starcloud-1},
  howpublished = {\url{https://www.starcloud.com/starcloud-1}},
  year         = {2025}
}

@article{ding2021joint,
  title={Joint optimization of transmission and computation resources for satellite and high altitude platform assisted edge computing},
  author={Ding, Changfeng and Wang, Jun-Bo and Zhang, Hua and Lin, Min and Li, Geoffrey Ye},
  journal={IEEE Trans. Wireless Commun.},
  volume={21},
  number={2},
  pages={1362--1377},
  year={2021},
  publisher={IEEE}
}

@article{ouyang2023joint,
  title={Joint in-orbit computation and communication for minimizing download time from LEO satellites},
  author={Ouyang, Qiaolin and Ye, Neng and Gao, Jie and Wang, Aihua and Zhao, Lian},
  journal={IEEE Trans. Mob. Comput.},
  volume={23},
  number={5},
  pages={3950--3963},
  year={2023},
  publisher={IEEE}
}

@article{cao2023computing,
  title={Computing-aware routing for LEO satellite networks: A transmission and computation integration approach},
  author={Cao, Jiaqi and Zhang, Shengli and Chen, Qingxia and Wang, Houtian and Wang, Mingzhe and Liu, Naijin},
  journal={IEEE Trans. Veh. Technol.},
  volume={72},
  number={12},
  pages={16607--16623},
  year={2023},
  publisher={IEEE}
}

@article{lai2024joint,
  title={Joint computation offloading and resource allocation for LEO satellite networks using hierarchical multi-agent reinforcement learning},
  author={Lai, Junyu and Liu, Huashuo and Xu, Guoyao and Jiang, Weiwei and Wang, Xiong and Jiang, Dingde},
  journal={IEEE Trans. Cogn. Commun. Netw.},
  volume={11},
  number={4},
  pages={2554--2567},
  year={2024},
  publisher={IEEE}
}

@article{xiao2026sgicpnom,
  title={SGICPNOM: A Computation Offloading Mechanism for 6G Space-Ground Integrated Computing Power Network},
  author={Xiao, Yi and Xu, Xiaolong},
  journal={Computer Networks},
  pages={112082},
  year={2026},
  publisher={Elsevier}
}

@article{sun2026toward,
  title={Toward Communication-Efficient Space Data Centers: Bottlenecks, Architectures, and New Paradigms},
  author={Sun, Minghao and Chen, Zehui and Hou, Jinbo and Wang, Kezhi and Chu, Xiaoli},
  journal={arXiv preprint arXiv:2605.12681},
  year={2026}
}

@article{li2025age,
  title={Age-Critical Joint Communication and Computation Offloading for Satellite-Integrated Internet},
  author={Li, Kun and Jiao, Jian and Huang, Jianhao and Xu, Zhou and Sun, Qunying and Xu, Xiaofan and Wang, Ye and Zhang, Qinyu},
  journal={IEEE Trans. Cogn. Commun. Netw.},
  volume={12},
  pages={4387--4403},
  year={2025},
  publisher={IEEE}
}

@article{zhang2022exploiting,
  title={Exploiting collaborative computing to improve downlink sum rate in satellite integrated terrestrial networks},
  author={Zhang, Liwen and Liu, Junyu and Sheng, Min and Zhao, Nan and Li, Jiandong},
  journal={IEEE Trans. Veh. Technol.},
  volume={72},
  number={4},
  pages={4670--4682},
  year={2022},
  publisher={IEEE}
}

@article{wang2023energy,
  title={Energy-efficient design of satellite-terrestrial computing in 6G wireless networks},
  author={Wang, Qi and Chen, Xiaoming and Qi, Qiao},
  journal={IEEE Trans. Commun.},
  volume={72},
  number={3},
  pages={1759--1772},
  year={2023},
  publisher={IEEE}
}

@article{he2025joint,
  title={Joint data compression and task scheduling for LEO satellite networks},
  author={He, Lijun and Li, Shiyin and Jia, Ziye and Wang, Juncheng and Han, Zhu},
  journal={IEEE Trans. Veh. Technol.},
  year={2025},
  publisher={IEEE}
}

@inproceedings{gong2022computation,
  title={Computation offloading and energy harvesting schemes for sum rate maximization in space-air-ground networks},
  author={Gong, Yongkang and Yao, Haipeng and Xiong, Zehui and Guo, Song and Yu, F Richard and Niyato, Dusit},
  booktitle={GLOBECOM 2022-2022 IEEE Global Communications Conference},
  pages={3941--3946},
  year={2022},
  organization={IEEE}
}

@inproceedings{samsi2023words,
  title={From words to watts: Benchmarking the energy costs of large language model inference},
  author={Samsi, Siddharth and Zhao, Dan and McDonald, Joseph and Li, Baolin and Michaleas, Adam and Jones, Michael and Bergeron, William and Kepner, Jeremy and Tiwari, Devesh and Gadepally, Vijay},
  booktitle={2023 IEEE high performance extreme computing conference (HPEC)},
  pages={1--9},
  year={2023},
  organization={IEEE}
}

@article{qu2026trimcaching,
  title={TrimCaching: Parameter-sharing edge caching for AI model downloading},
  author={Qu, Guanqiao and Lin, Zheng and Chen, Qian and Li, Jian and Liu, Fangming and Chen, Xianhao and Huang, Kaibin},
  journal={IEEE Transactions on Networking},
  year={2026},
  publisher={IEEE}
}

@article{qu2025mobile,
  title={Mobile edge intelligence for large language models: A contemporary survey},
  author={Qu, Guanqiao and Chen, Qiyuan and Wei, Wei and Lin, Zheng and Chen, Xianhao and Huang, Kaibin},
  journal={IEEE Commun. Surveys Tuts.},
  volume={27},
  number={6},
  pages={3820--3860},
  year={2025},
  publisher={IEEE}
}

@article{wei2022chain,
  title={Chain-of-thought prompting elicits reasoning in large language models},
  author={Wei, Jason and Wang, Xuezhi and Schuurmans, Dale and Bosma, Maarten and Xia, Fei and Chi, Ed and Le, Quoc V and Zhou, Denny and others},
  journal={Adv. Neural Inf. Process. Syst.},
  volume={35},
  pages={24824--24837},
  year={2022}
}

@article{song2020denoising,
  title={Denoising diffusion implicit models},
  author={Song, Jiaming and Meng, Chenlin and Ermon, Stefano},
  journal={arXiv preprint arXiv:2010.02502},
  year={2020}
}

@book{wertz2012spacecraft,
  title={Spacecraft attitude determination and control},
  author={Wertz, James R},
  year={2012},
  publisher={Springer Science \& Business Media}
}

@book{bate2020fundamentals,
  title={Fundamentals of astrodynamics},
  author={Bate, Roger R and Mueller, Donald D and White, Jerry E and Saylor, William W},
  year={2020},
  publisher={Courier Dover Publications}
}

@article{qian2025energy,
  title={Energy-efficient data gathering and computing in leo satellite-assisted marine iot networks},
  author={Qian, Li Ping and Fan, Xiaohui and Li, Mingqing and Wu, Yuan},
  journal={IEEE Trans. Cogn. Commun. Netw.},
  year={2025},
  publisher={IEEE}
}

@article{ho2020denoising,
  title={Denoising diffusion probabilistic models},
  author={Ho, Jonathan and Jain, Ajay and Abbeel, Pieter},
  journal={Adv. Neural Inf. Process. Syst.},
  volume={33},
  pages={6840--6851},
  year={2020}
}

@article{dai2025energy,
  title={Energy-efficient multi-access edge computing for heterogeneous satellite-maritime networks: A hybrid harvesting-and-offloading design},
  author={Dai, Minghui and Chang, Shan and Wang, Yixuan and Su, Zhou},
  journal={IEEE Trans. Mob. Comput.},
  year={2025},
  publisher={IEEE}
}

@inproceedings{varan2014energy,
  title={Energy harvesting communications with continuous energy arrivals},
  author={Varan, Burak and Tutuncuoglu, Kaya and Yener, Aylin},
  booktitle={2014 Information Theory and Applications Workshop (ITA)},
  pages={1--10},
  year={2014},
  organization={IEEE}
}

@article{ku2015advances,
  title={Advances in energy harvesting communications: Past, present, and future challenges},
  author={Ku, Meng-Lin and Li, Wei and Chen, Yan and Liu, KJ Ray},
  journal={IEEE Commun. Surveys Tuts.},
  volume={18},
  number={2},
  pages={1384--1412},
  year={2015},
  publisher={IEEE}
}

@article{corless1996lambert,
  title={On the Lambert W function},
  author={Corless, Robert M and Gonnet, Gaston H and Hare, David EG and Jeffrey, David J and Knuth, Donald E},
  journal={Advances in Computational mathematics},
  volume={5},
  number={1},
  pages={329--359},
  year={1996},
  publisher={Springer}
}

@inproceedings{cai2024condition,
  title={Condition-aware neural network for controlled image generation},
  author={Cai, Han and Li, Muyang and Zhang, Qinsheng and Liu, Ming-Yu and Han, Song},
  booktitle={Proceedings of the IEEE/CVF Conference on Computer Vision and Pattern Recognition},
  pages={7194--7203},
  year={2024}
}

@inproceedings{radford2021learning,
  title={Learning transferable visual models from natural language supervision},
  author={Radford, Alec and Kim, Jong Wook and Hallacy, Chris and Ramesh, Aditya and Goh, Gabriel and Agarwal, Sandhini and Sastry, Girish and Askell, Amanda and Mishkin, Pamela and Clark, Jack and others},
  booktitle={International conference on machine learning},
  pages={8748--8763},
  year={2021},
  organization={PmLR}
}

@inproceedings{rombach2022high,
  title={High-resolution image synthesis with latent diffusion models},
  author={Rombach, Robin and Blattmann, Andreas and Lorenz, Dominik and Esser, Patrick and Ommer, Bj{\"o}rn},
  booktitle={Proceedings of the IEEE/CVF conference on computer vision and pattern recognition},
  pages={10684--10695},
  year={2022}
}

@article{yang2016towards,
  title={Towards energy-efficient routing in satellite networks},
  author={Yang, Yuan and Xu, Mingwei and Wang, Dan and Wang, Yu},
  journal={IEEE J. Sel. Areas Commun.},
  volume={34},
  number={12},
  pages={3869--3886},
  year={2016},
  publisher={IEEE}
}

@article{luo2013optimal,
  title={Optimal save-then-transmit protocol for energy harvesting wireless transmitters},
  author={Luo, Shixin and Zhang, Rui and Lim, Teng Joon},
  journal={IEEE Trans. Wireless Commun.},
  volume={12},
  number={3},
  pages={1196--1207},
  year={2013},
  publisher={IEEE}
}

@article{wang2026spacemoe,
  title={SpaceMoE: Realizing Distributed Mixture-of-Experts Inference over Space Networks},
  author={Wang, Zhanwei and Yang, Huiling and Sheng, Min and Letaief, Khaled B and Huang, Kaibin},
  journal={arXiv preprint arXiv:2605.00515},
  year={2026}
}

@article{wang2026revisiting,
  title={Revisiting outage for edge inference systems},
  author={Wang, Zhanwei and Zeng, Qunsong and Zheng, Haotian and Huang, Kaibin},
  journal={IEEE Trans. Commun.},
  year={2026},
  publisher={IEEE}
}

@article{li2024battery,
  title={Battery-aware energy optimization for satellite edge computing},
  author={Li, Qing and Wang, Shangguang and Ma, Xiao and Zhou, Ao and Wang, Yue and Huang, Gang and Liu, Xuanzhe},
  journal={IEEE Transactions on Services Computing},
  volume={17},
  number={2},
  pages={437--451},
  year={2024},
  publisher={IEEE}
}

@article{tutuncuoglu2012optimum,
  title={Optimum transmission policies for battery limited energy harvesting nodes},
  author={Tutuncuoglu, Kaya and Yener, Aylin},
  journal={IEEE Trans. Wireless Commun.},
  volume={11},
  number={3},
  pages={1180--1189},
  year={2012},
  publisher={IEEE}
}

@article{yang2011optimal,
  title={Optimal packet scheduling in an energy harvesting communication system},
  author={Yang, Jing and Ulukus, Sennur},
  journal={IEEE Trans. Commun.},
  volume={60},
  number={1},
  pages={220--230},
  year={2011},
  publisher={IEEE}
}

@article{chen2024fedmeld,
  title={FedMeld: A model-dispersal federated learning framework for space-ground integrated networks},
  author={Chen, Qian and Chen, Xianhao and Huang, Kaibin},
  journal={arXiv preprint arXiv:2412.17231},
  year={2024}
}

@article{esmat2023toward,
  title={Toward resilient network slicing for satellite--terrestrial edge computing IoT},
  author={Esmat, Haitham H and Lorenzo, Beatriz and Shi, Weisong},
  journal={IEEE Internet Things J.},
  volume={10},
  number={16},
  pages={14621--14645},
  year={2023},
  publisher={IEEE}
}

@article{ozel2011transmission,
  title={Transmission with energy harvesting nodes in fading wireless channels: Optimal policies},
  author={Ozel, Omur and Tutuncuoglu, Kaya and Yang, Jing and Ulukus, Sennur and Yener, Aylin},
  journal={IEEE J. Sel. Areas Commun.},
  volume={29},
  number={8},
  pages={1732--1743},
  year={2011},
  publisher={IEEE}
}

@article{cheriet2021inertia,
  title={Inertia tensor estimation for a rigid nadir pointing satellite based on star tracker},
  author={Cheriet, ME and Bellar, Abdellatif and Ghaffour, Mohammed Y and Adnane, Akram and Mohammed, Mohammed A},
  journal={Advances in aircraft and spacecraft science},
  volume={8},
  number={2},
  pages={111--126},
  year={2021},
  publisher={Techno-Press}
}

@article{chen2025space,
  title={Space--ground fluid AI for 6G edge intelligence},
  author={Chen, Qian and Wang, Zhanwei and Chen, Xianhao and Wen, Juan and Zhou, Di and Ji, Sijing and Sheng, Min and Huang, Kaibin},
  journal={Engineering},
  year={2025},
  publisher={Elsevier}
}

\end{document}